\newcommand{\bigohpar}[1]{\ensuremath{\mathcal{O} \left(#1\right)}}
\newcommand{\drifter}{\texttt{drifter}\xspace}
\newcommand{\airquality}{\textsc{aq}\xspace}
\newcommand{\bikesharing}{\textsc{bike}\xspace}
\newcommand{\bikesharingr}{\textsc{bike(raw)}\xspace}
\newcommand{\bikesharingd}{\textsc{bike(detr)}\xspace}
\newcommand{\airline}{\textsc{airline}\xspace}
\newcommand{\synthetic}{\textsc{synthetic}\xspace}
\newcommand{\syntheticns}{\textsc{synthetic}}
\newtheorem{theorem}{Theorem}
\newtheorem{problem}{Problem}
\begin{document}

% Use the \preprint command to place your local institutional report
% number in the upper righthand corner of the title page in preprint mode.
% Multiple \preprint commands are allowed.
% Use the 'preprintnumbers' class option to override journal defaults
% to display numbers if necessary
%\preprint{}

%Title of paper
\title{Estimating regression errors without ground truth values}
% repeat the \author .. \affiliation  etc. as needed
% \email, \thanks, \homepage, \altaffiliation all apply to the current
% author. Explanatory text should go in the []'s, actual e-mail
% address or url should go in the {}'s for \email and \homepage.
% Please use the appropriate macro foreach each type of information

% \affiliation command applies to all authors since the last
% \affiliation command. The \affiliation command should follow the
% other information
% \affiliation can be followed by \email, \homepage, \thanks as well.

\author{Henri Tiittanen}
%\email[]{Your e-mail address}
%\homepage[]{Your web page}
%\thanks{}
%\altaffiliation{}
\author{Emilia Oikarinen}
\email[]{emilia.oikarinen@helsinki.fi}
%\homepage[]{Your web page}
%\thanks{}
%\altaffiliation{}
\author{Andreas Henelius}
%\email[]{Your e-mail address}
%\homepage[]{Your web page}
%\thanks{}
%\altaffiliation{}
\author{Kai Puolam\"aki}
\email[]{kai.puolamaki@helsinki.fi}
%\homepage[]{Your web page}
%\thanks{}
%\altaffiliation{}
\affiliation{Department of Computer Science, University of Helsinki, Helsinki, Finland}

\date{\today}

\begin{abstract}
Regression analysis is a standard supervised machine learning method
used to model an outcome variable in terms of a set of predictor
variables. In most real-world applications we do not know the true value
of the outcome variable being predicted outside the training data,
i.e., the ground truth is unknown. It is hence not
straightforward to directly observe
when the estimate from a model potentially is
wrong, due to phenomena such as overfitting and concept drift. In this
paper we present an efficient framework for estimating the
generalization error of regression functions, applicable to any family
of regression functions when the ground truth is unknown. We present
a theoretical derivation of the framework and 
empirically evaluate its strengths and limitations. We find that it
performs robustly and is useful for detecting concept drift in
datasets in several real-world domains.
\end{abstract}

% insert suggested keywords - APS authors don't need to do this
%\keywords{concept drift, generalization error, unknown ground truth}
\maketitle

\section{Introduction\label{sec:intro}}

Regression models are one of the most used and studied machine
learning primitives. For example, a bibliographic search of the
Physical Review E journal reveals that during 2018 the journal
published 11 articles containing the word ``regression'' already in
the title or abstract \cite{PhysRevE.98.043311, PhysRevE.98.043308,
  PhysRevE.98.043102, PhysRevE.98.032303, PhysRevE.98.022137,
  PhysRevE.98.022109, PhysRevE.98.012136, PhysRevE.97.063107,
  PhysRevE.97.062123, PhysRevE.97.022312, PhysRevE.97.012113}.  In
regression analysis, the idea is to estimate the value of the
dependent variable (denoted by $y\in{\mathbb{R}}$) given a
$m$-dimensional vector of covariates (here we assume real valued
attributes, denoted by $x\in{\mathbb{R}}^m$). The regression model is
trained using \emph{training data} in such a way that it gives good
estimates of the dependent variable on \emph{testing data} unseen
in the training phase. In addition to estimating the value of the
dependent variable, it is in practice important to know the
reliability of the estimate on testing data. In this paper, we use the
expected root mean square error (RMSE) between the dependent variable
and its estimate to quantify the uncertainty, but some other error
measure could be used as well.

In textbooks, one finds a plethora of ways to train various regression
models and to estimate uncertainties, see, e.g., \cite{Hastie2009}.
For example, for a Bayesian regression model the reliability of the
estimate can be expressed in terms of the posterior distribution or,
more simply, as a confidence interval around the estimate. Another
alternative to estimate the error of a regression estimate on yet
unseen data is to use (cross-)validation. 
All of these approaches give some measure of the
error on testing data, even when the dependent variable is unknown.

Textbook approaches are, however, \emph{valid only when the training
  and testing data obey the same distribution}. In many practical
applications this assumption does not hold:  an phenomenon known
as \emph{concept drift} \cite{Gama2016} occurs. Concept drift means that the
distribution of the data changes over time, in which case the
assumptions made by the regression model break down, resulting in
regression estimates with unknown and possibly large errors.
A typical example of concept drift occurs in sensor calibration, where
a regression model trained to model sensory response may fail when the
environmental conditions change from those used to train the
regression model \cite{Kadlec2011, Vergara2012, Rudnitskaya2018,
  Maag2018, huggard2018predicting}. Another example is given by online
streaming data applications such as sentiment classification
\cite{bifet2010sentiment} and e-mail spam detection
\cite{lindstrom2010handling}, where in order to optimize
the online training process, the model should only be retrained when
its performance is degraded, i.e., when concept drift is
detected.

At simplest, if the ground truth (the dependent variable $y$) is
known, concept drift may be detected simply by observing the magnitude
of the error, i.e., the difference between the regression estimate and
the dependent variable. However, in practice, this is often not the case. 
Indeed, usually the motive for using a regression model is
that the value of the dependent variable is not readily available. In
this paper, we address the problem of \emph{assessing the regression
  error when the ground truth is unknown}. It is surprising that
despite the significance of the problem it has not really been
adequately addressed in the literature, see Sec.~\ref{sec:related} for a discussion of related work. 
In this paper we do not
 focus on any particular domain, such as, e.g., sensor
calibration or sentiment or spam classification. Instead, our goal is
to introduce a generic computational methodology that can be applied
in a wide range of domains where regression is used.

Concept drift can be divided into two main categories, namely 
\emph{real concept drift} and \emph{virtual concept drift}. The
former refers to the change in the conditional probability $p(y\mid
x)$ and the latter to the change in the distribution of the covariates
$p(x)$, see, e.g., \cite[Sec 2.1]{Gama2016} for a discussion. If only
the covariates, i.e., $x$, are known but the ground truth, i.e., $y$,
is not, then it is not possible even in theory to detect changes
occurring in $p(y\mid x)$ only but not in $p(x)$. However, it is
possible to detect changes in $p(x)$ even when the values of $y$ 
 have not been observed. 
 For this reason, we focus on the
\emph{detection of virtual concept drift} in this paper. Note, that one
possible interpretation for a situation where $p(y\mid x)$ changes but
$p(x)$ remains unchanged is that we are missing some covariates from
$x$ which would parametrize the changes in $p(y\mid x)$. Therefore, an
occurrence of real concept drift without virtual concept drift can
indicate that we might not have all necessary attributes at our
disposal. An obvious solution is to include more attributes into the
set of covariates.

One should further observe, that when studying concept drift, we are not
interested in detecting merely any changes in the distribution of $x$.
Rather, we are only interested in changes that likely increase the
error of the regression estimates, a property which is satisfied by
our proposed method.

\subsection{Contributions and organization} In this paper we (i) define the
problem of detecting the concept drift which affects the regression
error when the ground truth is unknown, (ii) present an efficient
algorithm to solve the problem for arbitrary (black-box) regression
models, (iii) show theoretical properties of our solution, and (iv)
present an empirical evaluation of our approach.

The rest of this paper is structured as follows. In
Sec.~\ref{sec:related} we review the related work. In
Sec.~\ref{sec:methods} we introduce the idea behind our proposed
method for detecting virtual concept drift, which is then formalized
in the algorithm discussed in Sec.~\ref{ssec:drifter}. We demonstrate
different aspects of our method in the experimental evaluation in
Sec.~\ref{sec:experiments}. Finally, we conclude with a discussion in
Sec.~\ref{sec:discussion}.

\section{Related Work \label{sec:related}}

The term ``concept drift'' was coined by
 Schlimmer and Granger
\cite{schlimmer1986incremental} to describe the phenomenon where the
data distribution changes over time in dynamically changing and
non-stationary environments. The research related to concept drift has
become popular over the last decades with many real world
applications, see, e.g., the recent surveys \cite{Gama2016,
  zliobaite:etal2016, lu2018learning}.

Most of the concept drift literature focuses on classification
problems and concept drift adaptation problems.  In contrast,  
in this paper our
focus is on detecting virtual concept drift in
\emph{regression problems}. There are very few works on concept drift
in regression, although some of the ideas used with classifiers may be
applicable to regression functions.
Concept drift detection methods can be divided into
supervised (requiring ground truth values) and unsupervised (requiring
no ground truth values) approaches. Our approach falls into the latter
category, and in the following we focus on reviewing the unsupervised
approaches to concept drift detection.

We first briefly mention some {\em methods requiring ground truth
  values}. In \cite{gama:2004}, classifier prediction errors are used
to detect concept drift. The idea is to maintain concept windows, and
model prediction errors using a binomial distribution to obtain an
error probability for each window. Then, if the number of windows
containing errors is above a threshold, concept drift is detected.
The method proposed in \cite{wang:2017} is one of the few concept
drift detection methods for regression. There, an ensemble of multiple
regression models trained on sequences/subsets of the data is used to
find the best weighting for combining their predictions, and concept
drift is then defined as the angle between the estimated weight and
mean weight vectors. While the method in \cite{wang:2017} has
similarities to our method proposed here (i.e., training several
regressors on subsets of data), the fundamental difference is that
in \cite{wang:2017} the ground truth values are required.
Ikonomovska et al.~\cite{ikonomovska:2011} train models on subsets of the data and use
an ensemble of model trees where each model in the tree is trained on
different parts of the data. Concept drift is detected by monitoring
the model errors. 

When considering \emph{methods that require no ground truth values},
the approaches can be divided roughly into two categories: methods
detecting purely distributional changes and methods that also take
into account the model in some way. Concept drift detection approaches
based on directly monitoring the covariate distribution $p(x)$ detect
all changes in $p(x)$ regardless of their effect on the performance of
the prediction model, examples of such methods include, e.g.,
\cite{dasu2006information, DBLP:conf/kdd/ShaoAK14,
  DBLP:conf/kdd/QahtanAWZ15}. 
There are also approaches for covariate change detection without comparing distributions directly.
For instance, \cite{demello:2019} proposes a drift detection method using different
measure functions (e.g., statistical moments and power spectrum) on
particular time-series windows, and then a divergence value is used to classify windows to concept drift or not.
  However, if the task is to detect
concept drift that degrades the performance of the model, these
approaches suffer from a high false alarm rate \cite{sethi:2017}.

 The MD3 method  \cite{sethi:2017} 
uses classifier margin densities for concept drift detection, hence, requiring a
classifier that has some meaningful notion of margin, such as, e.g., 
a probabilistic classifier and a support
vector machine. The method works by dividing the input data into
segments, and for each segment the proportion of samples in the
margin, denoted by $\rho$, is computed. The minimum and
maximum values of $\rho$ are monitored, and if their difference
exceeds a given threshold, concept drift is declared.

In \cite{DBLP:journals/evs/LindstromND13},  a stream of indicator values  correlated to concept drift  is calculated
from test data
windows. If a certain proportion
of previous indicator values are above a threshold, concept drift is
declared. The indicator values are computed using the Kullback-Leibler
divergence to compare the histogram of classifier output confidence
scores on a test window to a reference window.
The method is not generic, however, since 
it requires a classifier producing a score that can be interpreted
as an estimate of the confidence associated with the correctness of
the prediction. 
Since probabilistic regression models provide direct information of
the model behavior in the form of uncertainty estimates, it is
straightforward to implement a concept drift detection measure by
thresholding the uncertainty estimate, e.g., 
in \cite{DBLP:conf/sdm/ChandolaV11} a method
based on Gaussian processes for time series change detection is presented. 

Further approaches to concept drift detection include 
 \cite{DBLP:journals/jucs/SobolewskiW13}, in which the method is
developed especially for data containing recurring concepts. Hence, the method in  \cite{DBLP:journals/jucs/SobolewskiW13}
requires prior knowledge about properties of concepts present in the
data, namely the samples residing in
the centers or at the borders of the class clusters,  to be incorporated into the model.
Then, a distinct classification model is trained for each concept, and for each test
data segment the closest concept in training data is selected using a
non-parametric statistical test. The test data segment is then
classified using that particular classification model. 
In \cite{DBLP:conf/icdm/Zliobaite10} concept drift detection for
binary classification is performed by comparing classifier output
label sequences to a reference window. It is assumed that the training
and testing data samples originate from two binomial distributions,
and concept drift is detected by using statistical testing with the
null hypothesis that the distributions are equal.

\section{Methods}
\label{sec:methods}
Let the \emph{training data} $D_\mathrm{tr}$ consist of
$n_\mathrm{tr}$ triplets: $D_\mathrm{tr}=\{(i, x_i, y_i)\}_{i=1}^{n_\mathrm{tr}}$, where $i \in
[n_\mathrm{tr}] = \{1, \ldots, n_\mathrm{tr}\}$ is the time index,
$x_i \in \mathbb{R}^m$ are the covariates and $y_i \in \mathbb{R}$ is
the dependent variable. Also, let the \emph{testing data} similarly be
given by $D_\mathrm{te} = \{(i, x'_i, y'_i)\}_{i=1}^{n_\mathrm{te}}$ where $i \in
[n_\mathrm{te}]$, and the covariates and the dependent variable are
given by $x'_i \in \mathbb{R}^m$ and $y'_i \in \mathbb{R}$,
respectively. Furthermore, let the \emph{reduced testing data} be the
testing data without the dependent variable, i.e.,
$D_{\mathrm{te}}'=\{(i, x'_i)\}_{i=1}^{n_\mathrm{te}}$. 

\emph{Segments} of the data are defined by tuples $s = (a, b)$ where
$a$ and $b$ are the endpoints of the segment such that $a\leq b$. We write
${D}_{|s}$ to denote the triplets in $D=\{(i,x_i,y_i)\}_{i=1}^n$ such that
the time index $i$ belongs to the segment $s$, i.e.,
$${D}_{|s}=\{ (i,x_i,y_i) \mid a\leq i\leq b\}.$$

Assume that we are given a
\emph{regression function} $f : \mathbb{R}^m \mapsto
\mathbb{R}$ trained using $D_\mathrm{tr}$. The function $f$ estimates the value of the
dependent variable at time $i$ given the covariates, i.e., $y'_i \approx
\hat y'_i = f(x'_i)$. The \emph{generalization error} of $f$ on the data set $D=\{(i,x'_i,y'_i)\}_{i=1}^n$ is
defined as
\begin{equation}
\label{eq:generalisationerror}
  \mathrm{RMSE}(f, D)  = \left(\sum\nolimits_{i=1}^{n}{\left[  f(x'_i) - y'_i\right]^2/n}\right)^{1/2},
\end{equation}
i.e., we consider the mean squared error.
In this paper we consider the following problem:
\begin{problem}
\label{prob:main}
  Given a regression function
  $f$ trained using the dataset $D_\mathrm{tr}$,
  and a threshold~$\sigma$, predict whether the generalization error
  $E$ of $f$ on the testing data~$D$ as defined by Eq.~\eqref{eq:generalisationerror}
  satisfies $E \ge \sigma$ when only the reduced testing data
  $D'$ is known and the true dependent variable $y'_i$, $i \in
  [n]$, is unknown.
\end{problem}

\subsection{Overview of the main idea}\label{sec:idea}

As discussed above in the introduction,
without the ground truth we can only detect virtual concept drift that
occurs as a consequence of changes in the covariate distribution
$p(x)$. We therefore need a distance measure $d(x)$ that measures how
``far'' a vector $x$ is from the data $D_\mathrm{tr}$ that was used to
train the regressor. Small values of $d(x)$ (which we will later call
the {\em concept drift indicator} variable) mean that we are close to
the training data and the regressor function should be reliable, while
a large value of $d(x)$ means that we have moved away from the
training data, after which the regression estimate may be inaccurate.

It is possible to list some properties that a good distance measure
should have.  On one hand, we are only interested in the changes in
the covariate distribution $p(x)$ that may affect the behavior of the
regression. For example, if there are attributes that the regressor
does not use, then changes in the distribution of that attribute alone
should not be relevant.
On the other hand, if a changed (i.e., drifted) 
attribute is important for the output of the regressor, then changes
in this attribute may cause concept drift and the value of $d(x)$
should be large.

We propose to define this distance measure as follows. We first train
different regression functions, say $f$ and $f'$, on different subsets
of the training data. We then define the distance measure to be the
difference between the predictions of these two functions, e.g.,
$d(x)=[f(x)-f'(x)]^2$. The details how we select the subsets and
compute the difference are given later in Section \ref{ssec:drifter}.

We can immediately observe that this kind of a distance measure has
the suitable property that if some attributes are independent of the
dependent variable, then they will not affect the behavior of the
regression functions and, hence, the distance measure $d$ is not
sensitive to them. In the next section we show that at least in the
case of a simple linear model, the resulting measure is, in fact,
monotonically related to the expected quadratic error of the regression
function.

\subsection{Theoretical motivation}
\label{sec:tmotivation}

In this section, we show that our method can be used to approximate
the ground truth error for an ordinary least squares (OLS) linear
regression model. Assume, that our covariates $x_i\in{\mathbb{R}}^m$
have been sampled from some distribution, for which the expected
values and the covariance matrix exists with the first term being the
intercept, or $x_{i1}=1$. Hence, we rule out, e.g., the Cauchy
distribution for which the expected value and variance are undefined.
Given the parameter vector $\beta\in{\mathbb{R}}^m$ and the variance
$\sigma_y^2$, the dependent variable is given
by $$y_i=\beta^Tx_i+\epsilon_i,$$ where $\epsilon_i$ are independent
random variables with zero mean and variance of $\sigma_y^2$.

Now, assume that we have trained an OLS linear regression model,
parametrized by $\hat\beta$, on a dataset of size $n$ and obtained a
linear model $f$, and that we have also trained a different linear
model on an independently sampled dataset of size $n'$ and obtained a
linear model $f'$ parametrized by $\hat\beta'$, respectively. For a
given $x$, the estimates of the dependent variable $y$ are then given
by $\hat y=f(x)=\hat\beta^Tx$ and $\hat y'=f'(x)=\hat\beta'^Tx$,
respectively.

We now prove the following theorem.
\begin{theorem}\label{thm:monotonic}
  Given the definitions above,
  the expected mean squared error $E\left[(f(x)-y)^2\right]$ is
  monotonically related to the expectation of the squared difference between the two regressors $f$ and $f'$, 
  i.e., $E\left[(f(x)-f'(x))^2\right]$, by the following equation to a leading order in $n^{-1}$
  and $n'^{-1}$:
  \begin{equation}\label{eq:monotonic}
    E\left[(f(x)-y)^2\right]=(1+n/n')^{-1}E\left[(f(x)-f'(x))^2\right]+\sigma_y^2.
  \end{equation}
\end{theorem}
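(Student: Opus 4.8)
The plan is to compute both expectations in terms of the sampling distribution of the OLS estimators $\hat\beta$ and $\hat\beta'$, and then show that the ratio of the two quantities is (to leading order) the constant factor $(1+n/n')^{-1}$, with the irreducible noise $\sigma_y^2$ appearing as an additive term. The key fact about OLS that I would invoke is that, for a model $y_i=\beta^Tx_i+\epsilon_i$ with i.i.d.\ noise of variance $\sigma_y^2$, the estimator $\hat\beta$ is (asymptotically) unbiased with covariance $\mathrm{Cov}(\hat\beta)\approx \sigma_y^2 (X^TX)^{-1}\approx \sigma_y^2 \Sigma^{-1}/n$, where $\Sigma=E[xx^T]$ is the (assumed finite) second-moment matrix of the covariates and $n$ is the training set size. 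The same holds for $\hat\beta'$ with $n$ replaced by $n'$, and crucially $\hat\beta$ and $\hat\beta'$ are independent because they are fit on independently sampled datasets.

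First I would expand the left-hand side. Writing $f(x)-y=(\hat\beta-\beta)^Tx-\epsilon$ and taking expectation over the test point $x$, the noise $\epsilon$ at the test point, and the training randomness in $\hat\beta$, the cross term vanishes (noise is independent and mean zero, and $\hat\beta$ is asymptotically unbiased so the $(\hat\beta-\beta)^Tx$ term has vanishing expectation against $x$ at leading order). This leaves
\begin{equation}
E\left[(f(x)-y)^2\right]=E\left[x^T(\hat\beta-\beta)(\hat\beta-\beta)^Tx\right]+\sigma_y^2\approx \mathrm{tr}\!\left(\Sigma\,\mathrm{Cov}(\hat\beta)\right)+\sigma_y^2\approx \frac{\sigma_y^2 m}{n}+\sigma_y^2,
\end{equation}
where I used $E[xx^T]=\Sigma$ and the trace identity $E[x^TAx]=\mathrm{tr}(A\Sigma)$ for the fixed matrix $A=\mathrm{Cov}(\hat\beta)$, together with $\mathrm{tr}(\Sigma\Sigma^{-1})=m$.

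Next I would expand the right-hand side. Writing $f(x)-f'(x)=(\hat\beta-\hat\beta')^Tx$ and using independence of the two fits, $\mathrm{Cov}(\hat\beta-\hat\beta')=\mathrm{Cov}(\hat\beta)+\mathrm{Cov}(\hat\beta')\approx \sigma_y^2\Sigma^{-1}(n^{-1}+n'^{-1})$, the same trace computation gives $E[(f(x)-f'(x))^2]\approx \sigma_y^2 m\,(n^{-1}+n'^{-1})$. Multiplying this by $(1+n/n')^{-1}=n'/(n+n')$ yields exactly $\sigma_y^2 m/n$, which matches the first (non-noise) term on the left-hand side; adding $\sigma_y^2$ recovers Eq.~\eqref{eq:monotonic}. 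The main obstacle, and the place where the ``leading order in $n^{-1}$ and $n'^{-1}$'' qualifier does its work, is justifying the approximation $\mathrm{Cov}(\hat\beta)\approx\sigma_y^2\Sigma^{-1}/n$: the exact OLS covariance involves the random matrix $(X^TX)^{-1}$ rather than its expectation, so one must argue that $\frac{1}{n}X^TX\to\Sigma$ and control the error from replacing $(X^TX)^{-1}$ by $\Sigma^{-1}/n$ and from the small finite-sample bias, all of which contribute only at higher order in $1/n$ and $1/n'$. I expect everything else to be routine second-moment bookkeeping once independence and asymptotic unbiasedness are in hand.
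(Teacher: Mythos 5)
Your covariance bookkeeping is sound and matches the paper's ingredients: the OLS sampling covariance $\mathrm{Cov}(\hat\beta)\approx\sigma_y^2\Sigma^{-1}/n$ (the paper's Eq.~\eqref{eq:sigma}, with its $\Sigma$ defined so that the covariance is $n^{-1}\Sigma$), independence of the two fits so that the covariances add, and exact vanishing of the cross term with the test noise. The genuine gap is at the very first step, where you take the expectation \emph{over the test point} $x$, with $x$ drawn from the same distribution as the training covariates, so that $E[xx^T]=\Sigma$. That collapses both sides of Eq.~\eqref{eq:monotonic} into pure constants, $\sigma_y^2 m/n+\sigma_y^2$ and $\sigma_y^2 m\left(n^{-1}+n'^{-1}\right)$, and what you verify is a numerical identity between two numbers. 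A relation between two constants carries no monotonicity content, and, more importantly, it says nothing about test points that do \emph{not} come from the training covariate distribution --- which is precisely the concept-drift regime the theorem exists to address. A statement averaged over the training distribution cannot justify using $d(x)=[f(x)-f'(x)]^2$ as a pointwise proxy for the error at a drifted $x$.

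The paper instead keeps $x$ fixed: all expectations are over the training randomness in $\hat\beta,\hat\beta'$ and the test noise, \emph{conditional} on $x$, so Eqs.~\eqref{eq:mse1} and \eqref{eq:mse2} are two functions of $x$,
\begin{equation}
E\left[(f(x)-y)^2\right]=x^T\left(n^{-1}\Sigma\right)x+\sigma_y^2,\qquad
E\left[(f(x)-f'(x))^2\right]=x^T\left[\left(n^{-1}+n'^{-1}\right)\Sigma\right]x,
\end{equation}
and eliminating the common quadratic form $x^T\Sigma x$ between them yields Eq.~\eqref{eq:monotonic} pointwise in $x$; the factor $n^{-1}/(n^{-1}+n'^{-1})=(1+n/n')^{-1}$ is exactly your algebra. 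The repair to your argument is small: stop before integrating over $x$. Conditional on $x$, your own computation gives $E[(f(x)-y)^2]=\sigma_y^2\,x^T\Sigma^{-1}x/n+\sigma_y^2$ and $E[(f(x)-f'(x))^2]=\sigma_y^2\left(n^{-1}+n'^{-1}\right)x^T\Sigma^{-1}x$ (in your notation where $\Sigma=E[xx^T]$), and eliminating $x^T\Sigma^{-1}x$ gives the claim for every $x$. Your averaged version then follows as a trivial corollary --- and in fact for any test distribution, not just the training one --- but only the conditional-on-$x$ statement supports the paper's use of the regressor discrepancy as a drift indicator.
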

\begin{proof}
The translation of the covariates can be absorbed in the intercept
term of the parameter $\beta$ and the rotation can be compensated by
rotating the remainder of the vector $\beta$.  We can therefore,
without loss of generality, assume that the distribution from which
the covariates have been sampled has been centered so that all terms
except the intercept have an expectation of zero, or $x_{i1}=1$ and
$E[x_{ij}]=0$ for all $j\ne 1$. We can further assume that the axes of
the covariates have been rotated so that they are uncorrelated and
satisfy $$E\left[x_{ij}x_{ik}\right]=\sigma_{xj}^2\delta_{jk},$$ where
the Kronecker delta satisfies $\delta_{jk}=1$ if $j=k$ and
$\delta_{jk}=0$ otherwise.

Now, for a dataset of size $n$, the OLS estimate of $\beta$, denoted by
$\hat\beta$, is a random variable that obeys a distribution with a
mean of $\beta$ and a covariance given by $n^{-1}\Sigma$, where
\begin{equation}\label{eq:sigma}
\Sigma=\sigma_y^2{\rm{diag}}(1,\sigma_{x2}^{-2},\ldots,\sigma_{xm}^{-2})
+{\cal{O}}(n^{-1}),
\end{equation}
where the terms of the order $n^{-1}$ or smaller have been included in
${\cal{O}}(n^{-1})$. The covariance $n^{-1}\Sigma$ is therefore
proportional to $n^{-1}$ and hence, at the limit of a large dataset we
obtain the correct linear model, i.e.,
$\lim_{n\rightarrow\infty}{\hat\beta}=\beta$. For finite data there is
always an error in the estimate of $\hat\beta$. The expected
estimation error is larger for small data, i.e., if $n$ is small.

It follows from Eq.~\eqref{eq:sigma} that the
expected mean squared error for a model evaluated at $x$ is given by
\begin{equation}\label{eq:mse1}
  E\left[(f(x)-y)^2\right]=
  x^T\left(n^{-1}\Sigma\right)x+\sigma_y^2,
\end{equation}
and the expected quadratic difference between the linear model estimates is given by
\begin{equation}\label{eq:mse2}
  E\left[(f(x)-f'(x))^2\right]=
  x^T\left[\left(n^{-1}+n'^{-1}\right)\Sigma\right]x.
\end{equation}
We can solve for $x^T\Sigma x$ from Eq.~\eqref{eq:mse2}
and insert it in Eq.~\eqref{eq:mse1}, from which Eq.~\eqref{eq:monotonic}
follows.
\end{proof}

We hence postulate that the squared differences between the estimates
given by regressors trained on different subsets of the data --- either 
sampled randomly or obtained by other means --- can be
used to estimate the mean squared error even when the ground truth
(the value of $y$) is not known. Of course, in most interesting cases
the regression functions are not linear, but as we show later in
Sec.~\ref{sec:experiments}, the idea works also for real datasets and
complex non-linear regression models.

Our claim is therefore that the difference between the estimates of
regressors trained on different subsets of the data in the point $x$
defines a distance function which can be evaluated even when the
ground truth is unknown. If a data point $x$ is close to the data
points used to train the regressors the distance should be small. On
the other hand, if the data point is far away from the data used to
train the regressors, the predictions of the regressors diverge and
the distance and also the prediction error will be larger.

%  % trim order: left bottom right top
\begin{figure}[t]
  \centering
\includegraphics[width = 0.75\textwidth, trim=2mm 4mm 10mm 25mm, clip]{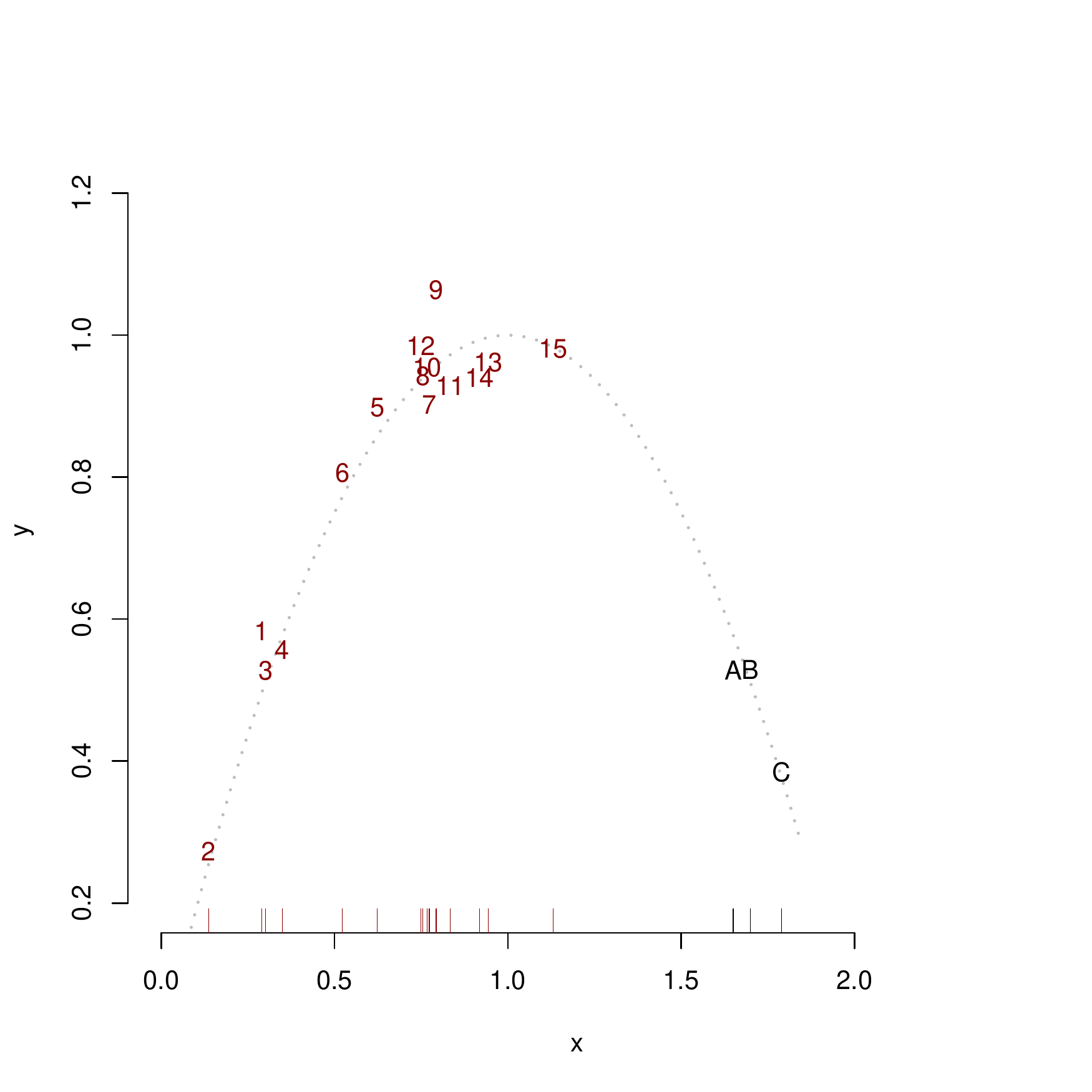}
\caption{Example data set with covariate $x$ and response variable $y$. The training data $D_{15}$ shown with numbers and the testing data $D_{ABC}$ shown with letters. \label{fig:drifter-toydata}}
\end{figure}

\section{The Drifter Algorithm} \label{ssec:drifter}

In this section we describe our algorithm
for detecting concept drift when the ground truth is
unknown. We start with a simple data set shown in
Fig.~\ref{fig:drifter-toydata} and go through the general idea using
this data as an example. We then continue by providing the
algorithmic details of the training and testing phases of our
algorithm (Sections~\ref{ssec:training} and \ref{ssec:testing},
respectively), and a discussion of how to select a suitable value for
the drift detection threshold in Sec.~\ref{ssec:threshold}.

Let us consider $D_{15}=\{(i, x_i, y_i)\}_{i=1}^{15}$ consisting of 15
data points, with the one-dimensional covariate $x_i$ and response
$y_i$, as shown in Fig.~\ref{fig:drifter-toydata}, and assume that the
data set $D_{15}$ has been used to train a Support Vector Machine
(SVM) regressor $f$. The SVM model estimate of $y$ is shown with black
solid line in Fig.~\ref{fig:drifter-overview2d}.  Our testing data
$D_{ABC}$ then consists of the data points labeled with $A,B$, and $C$
in Fig.~\ref{fig:drifter-toydata}, and we want to estimate the
generalization error of $f$, when we only have access to the
covariates of $D_{ABC}$.

\begin{figure}[t]
  \centering
\includegraphics[width = 0.75\textwidth, trim=2mm 4mm 10mm 20mm, clip]{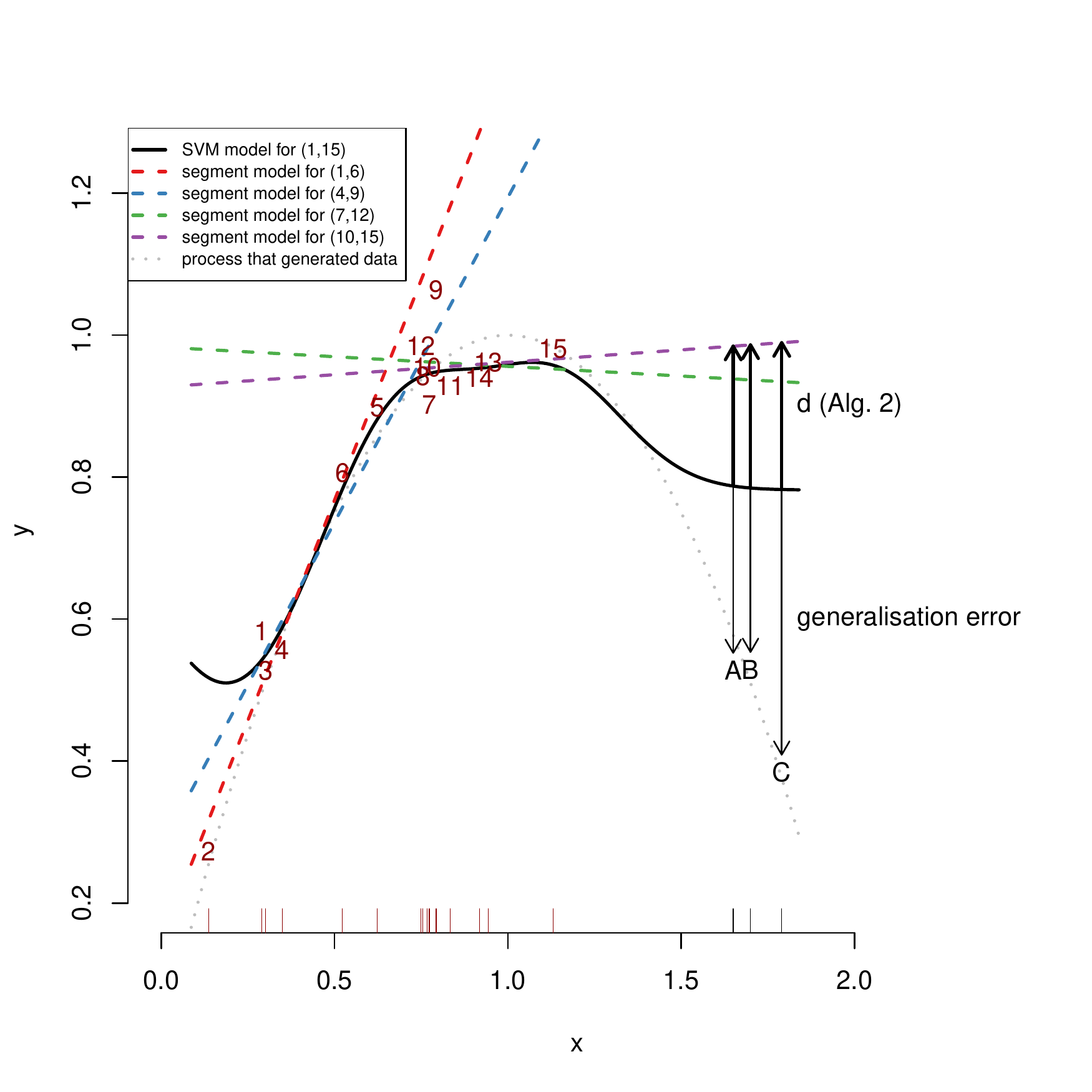}
\caption{The models trained using $D_{15}$ and subsequences of it. \label{fig:drifter-overview2d}}
\end{figure}

\begin{figure}[t]
  \centering
\includegraphics[width = 0.8\textwidth, trim=10mm 22mm 15mm 15mm, clip]{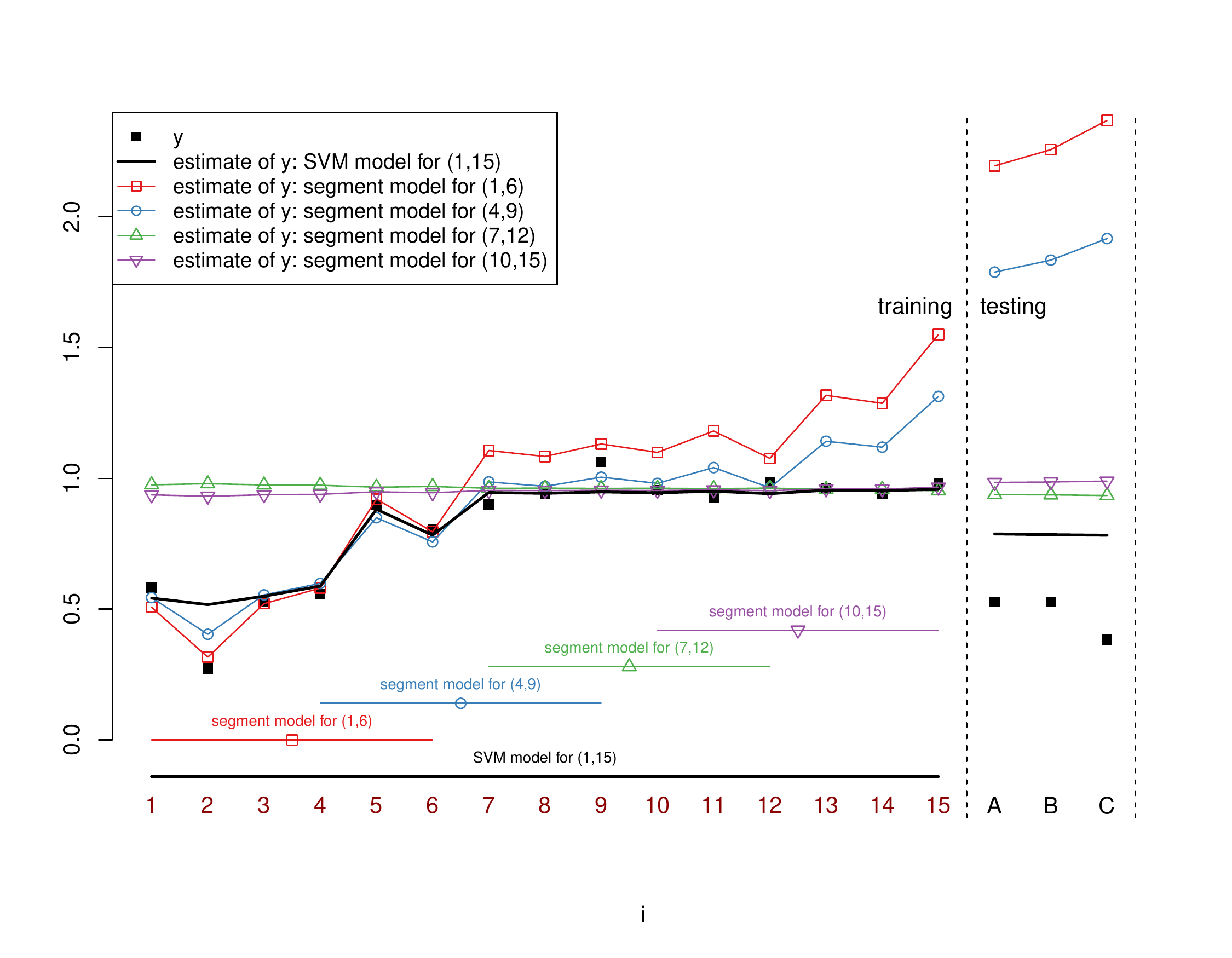}
\caption{The response variable $y$ and the estimates of $y$ using different models for the training data $D_{15}$ and the testing data $D_{ABC}$.\label{fig:drifter-overview1d}}
\end{figure}

Now, recalling Thm.~\ref{thm:monotonic}, we can estimate the
generalization error using the difference of two regressors. Thus,
instead of considering $\mathrm{RMSE}(f, D_{ABC})$, which we cannot
compute without knowing the response variable, we estimate the error
using the terms $[f(x)-f'(x)]^2$ instead of $[f(x)-y]^2$ for each
$(i,x,y)\in D_{ABC}$, where $f'$ is another regressor function. 

To obtain a suitable $f'$, we train several regression functions,
called {\em segment models} using subsequences of the data, i.e.,
segments. Our intuition here is that due to autocorrelation, a
subsequence is more likely to contain samples from the same
distribution of covariates. We call a distribution of covariates in a
subsequence a {\em concept}. In our example, we consider segments
$s_1=(1,6)$, $s_2=(4,9)$, $s_3=(7,12)$, and $s_4=(10,15)$, which are
overlapping, and we train the segment models using OLS regression.
With overlapping segments, we aim towards robustness, i.e., we assume
it to be unlikely that the overlapping
segmentation splits very clear concepts in a way that they would not
be present in any of the segments.
The linear segment models are shown in
Fig.~\ref{fig:drifter-overview2d} using colored dashed lines, and the
estimates are shown in Fig.~\ref{fig:drifter-overview1d} using the same
colors. We observe that the segment models are good estimates for the SVM model
on their respective training segments.

Using the segment models $f_1$,\ldots, $f_4$, we can compute an
estimate of the generalization error using the terms $[f(x)-f_i(x)]^2$
instead of $[f(x)-y]^2$ for each $f_i$. This allows us then to compute some
statistics based on the estimates obtained from this ensemble of
segment models. 
Here, we choose the statistics, namely the {\em concept drift indicator} value, to be the {\em
second smallest error}. The intuition is that if the test data
resembled some concept in the training data, and an overlapping
segmentation scheme was used, at least two of the segment models should
provide a reasonably small indicator value. Hence, if there exists
only a single small indicator value, it could well be due to chance.
Thus, using the second smallest value as the indicator value increases
the robustness of the method.

In Fig.~\ref{fig:drifter-overview1d} we visualize the terms
$[f(x)-f_4(x)]^2$ for each data point in $D_{ABC}$, where $f_4$
trained using the segment $(10,15)$ is the second-best linear model
for $D_{ABC}$.
Since our estimate for the generalization error of $f$ in $D_{ABC}$ is
large even when using the second-best linear model $f_4$, we conclude
that indeed it is likely that there is concept drift in $D_{ABC}$.
 
In the following, we formalize the ideas presented in the discussion
above, and provide a detailed description of the \emph{training} and
\emph{testing} phases of the \drifter algorithm.

  % --------------------------------------------------------------------------------
% Algorithm: training
% --------------------------------------------------------------------------------
\begin{algorithm2e}[t]
  \DontPrintSemicolon
  \SetKwProg{Fn}{Function}{}{}
  \SetKwInOut{params}{Input}\SetKwInOut{output}{Output}
  \SetKwFunction{trainreg}{TrainRegressor}
  \SetKwFunction{trainseg}{train\_f}
  \SetKwFunction{driftertrain}{drifter\_train}
   \SetKwFunction{testfunc}{MakeTesterFunc}
  \params{training data $D_\mathrm{tr}=\{(i,x_i,y_i)\}_{i=1}^{n_\mathrm{tr}}$,  segmentation $S$,\\
  function for training segment models \trainseg}
    \output{array $F$ of regression functions}
\Fn{
    \driftertrain{$D_\mathrm{tr}$,  $S$, \trainseg}}{
    $(s_1,\ldots, s_k)\leftarrow S$\;
   \For{$i \in [k]$}{  \label{alg:tr:2}
     $f_i \leftarrow$  \trainseg{${D_\mathrm{tr}}_{|s_i}$} \label{alg:tr:3} \;
    }
  \Return{$F\leftarrow(f_1,\ldots, f_k)$}
  }
  \caption{Training phase of {\small \texttt{drifter}}.\label{alg:drifter:train}}
\end{algorithm2e}

\subsection{Training phase\label{ssec:training}}

In the training phase of  \drifter (Alg.~\ref{alg:drifter:train}), we train the segment models
for subsequences, i.e., segments, of the training data. 
As input, we assume the training data $D_\mathrm{tr}=\{(i,x_i,y_i)\}_{i=1}^{n_\mathrm{tr}}$, 
a segmentation $S$ of $[n_\mathrm{tr}]$, and 
a function {\small \texttt{train\_f}}  for training segment models.

Hence, we assume that the user provides a segmentation $S$ of $[n_\mathrm{tr}]$ 
such that when the segment models
are trained, the data used to train a model approximately corresponds to only one \emph{concept}, i.e., the
models ``specialize'' in different concepts. Here there might, of course,
be overlap so that multiple models are trained using the same
concept. 
We show in Sec.~\ref{sec:experiments} that using a scheme in which   
the segmentation consists of equally-sized segments of length $l_\mathrm{tr}$ with 50\% overlap,
the \drifter method is quite robust with
respect to the choice of $l_\mathrm{tr}$, i.e., just selecting a reasonably small
segment length $l_\mathrm{tr}$ generally makes the method perform well and provides a simple baseline approach for selecting a segmentation.
However, the segments could well be of varying length or non-overlapping. For instance,
by using a segmentation that is a solution to the basis segmentation problem \cite{bingham2006segmentation},
one would know that each segment can be approximated with linear combinations
of the basis vectors.
  
The training phase essentially consists of training a regression function
$f_i$ for each segment $s_i \in S$  using ${D_\mathrm{tr}}_{|s_i}$ (lines \ref{alg:tr:2}--\ref{alg:tr:3} in Alg.~\ref{alg:drifter:train}).
These regression functions are the
\emph{segment models}. Note, that the
model family of the segment models is chosen by the user and provided as input to Alg.~\ref{alg:drifter:train}.
Natural choices are, e.g.,
linear regression models
or, if known, the same model family using which the function $f$, that will be used in the testing phase, was trained.

% --------------------------------------------------------------------------------
% Algorithm: testing
% --------------------------------------------------------------------------------
\begin{algorithm2e}[t]
  \DontPrintSemicolon
  \SetKwProg{Fn}{Function}{}{}
  \SetKwInOut{params}{Input}\SetKwInOut{output}{Output}
  \SetKwFunction{trainreg}{TrainRegressor}
  \SetKwFunction{driftereval}{drifter\_test}
  \SetKwFunction{sort}{sort\_increasing}
  \SetKwFunction{threshold}{compute\_threshold}
  \params{testing data
  $D'_\mathrm{te}=\{(j,x'_j)\}_{j=1}^{n_\mathrm{te}} $,   model $f: \mathbb{R}^m \mapsto \mathbb{R}$,\\
  array of segment models $F$, integer $n_\mathrm{ind}$
  }
   \output{concept drift indicator variable $d$}
  \Fn{
    \driftereval{$D'_\mathrm{te}$,  $f$, $F$, $n_\mathrm{ind}$}} { 
    $(f_1,\ldots, f_k)\leftarrow F$\;
   \For{$i \in [k]$}{   \label{alg:te:3}
     $z_i\leftarrow   \mathrm{RMSE}^*(f, f_i, D'_\mathrm{te})$ \label{alg:te:4}\;
    }
   \Return{$d \leftarrow $ \sort{$z_1,\ldots, z_k$}$[n_\mathrm{ind}]$  \label{alg:te:5}}
}
   \caption{Testing phase  of {\small \texttt{drifter}}.}
  \label{alg:drifter:test}
\end{algorithm2e}

\subsection{Testing phase\label{ssec:testing}}

The tester function of \drifter (Alg.~\ref{alg:drifter:test}) takes as input the testing data $D'_\mathrm{te}$, the
model~$f$, the segment models $F$ from Alg.~\ref{alg:drifter:train}, and an integer
$n_\mathrm{ind}$ (the {\em indicator index order}). 
For each of the $k$ segment models $f_i$, we then determine the RMSE
between the predictions from $f_i$ and $f$ on the test data (lines
\ref{alg:te:3}--\ref{alg:te:4}),  i.e., we compute 
\begin{equation}
\label{eq:errorindex}
  \mathrm{RMSE}^*(f, f_i, D'_\mathrm{te})  =
  \left(\sum\nolimits_{j=1}^{n_\mathrm{te}}{\left[  f(x'_j) -f_i(x'_j)\right]^2}/n_\mathrm{te}\right)^{1/2},
\end{equation}
where $D'_\mathrm{te}=\{(j,x'_j)\}_{j=1}^{n_\mathrm{te}} $.
This gives us $k$ values $z_i$ (line \ref{alg:te:4}) estimating the generalization error, 
and we  then choose
the  $n_\mathrm{ind}$th smallest value as the value for the {\em concept drift indicator} variable $d$
(line \ref{alg:te:5}).
If this value is large, then 
the predictions from the full model on the test data in question can be
unreliable.

In this paper, we use $n_\mathrm{ind}=2$  by default.
The intuition behind this choice is that,
due to the overlapping segmentation scheme we use, it is reasonable to assume that
at least two of the segment models should have small values for $z_i$'s,
if the testing data has no concept drift,
while a single small value for $z_i$ could still occur by chance.

In the testing phase, there is an implicit assumption regarding the length of the testing data $n_\mathrm{te}$, i.e., it should hold that $n_\mathrm{te}\leq l_\mathrm{tr}$, where $l_\mathrm{tr}$ is the length of a segment in the training phase. This is because we assume that the segment models are trained to model concepts present in the training data. Hence, if $n_\mathrm{te}\gg l_\mathrm{tr}$, the testing data might consist of several concepts, resulting in a large value for the concept drift indicator value $d$, 
implying concept drift even in the absence of such.
This can be easily prevented, e.g., as we do in the experimental evaluation in Sec.~\ref{sec:experiments}, by dividing 
the testing data into smaller (non-overlapping) \emph{test segments} of length $l_\mathrm{te}\leq l_\mathrm{tr}$ 
and calling the 
tester function
(Alg.~\ref{alg:drifter:test}) for each of the test segments.  
Thus, in this way we obtain a concept drift indicator value for each smaller segment in the testing data. 

% --------------------------------------------------------------------------------

\subsection{Selection of the drift detection threshold\label{ssec:threshold}}

In order to solve Prob.~\ref{prob:main}, we still need a suitable concept drift detection threshold $\delta$, 
i.e., we need a way to define a threshold for the concept drift indicator variable $d$ (Alg.~\ref{alg:drifter:test}) 
that estimates the threshold $\sigma$ for the generalization error in Prob.~\ref{prob:main}.

As a general observation we note that a good concept drift detection threshold $\delta$ 
depends both on the dataset and the application at hand. In this section, we propose a general method
for obtaining a threshold, which according to our empirical evaluation (see Sec.~\ref{sec:experiments}) 
performs well in practice for the datasets used in this paper.
However, we note that a user knowledgeable of the particular
data and the application can use this knowledge to select and 
potentially adjust a better threshold for the data and the application at hand.

One could also, e.g., make use of a validation set $D_\mathrm{val}$ with known ground truth values
(not used in training of $f$) and
compute the generalization error $\mathrm{RMSE}(f,D_\mathrm{val})$.
Then a suitable  threshold~$\delta$ could be determined, e.g., using \emph{receiver
operating characteristics} (ROC) analysis, which
makes it possible to balance the tradeoff between false positives and
false negatives \cite{fawcett2006}.
Note, however, that one needs to assume that there is no concept drift in the validation set  $D_\mathrm{val}$, and
also one should consider, e.g., crossvalidation when training $f$ to prevent overfitting which could heavily affect
  $\mathrm{RMSE}(f,D_\mathrm{val})$.

We now describe how to compute the threshold $\delta$ using only the training data. 
We first split the training dataset into $\lfloor n_\mathrm{tr}/n_\mathrm{te}\rfloor$
(non-overlapping) segments of the same length as the testing data. 
Then, we compute the concept drift indicator value $d_i$ for each of these
segments $s_i$ in the training data ${D_\mathrm{tr}}_{|s_i}$
using Alg.~\ref{alg:drifter:test}, i.e.,  
$d_i=\mathtt{drifter\_test}({D_\mathrm{tr}}_{|s_i}, f, F, n_\mathrm{ind})$.
 We then choose an concept drift detection threshold $\delta$
 by using the mean and standard deviation of the indicator values of these segments
 \begin{equation}
 \label{eq:delta}
 \delta= \mathrm{mean}(d_i)+c\times \mathrm{sd}(d_i), 
 \end{equation}  
 where $c$ is a {\em constant} multiplier of choice.
The optimal value of $c$ depends on the properties of a particular
dataset, but our empirical evaluation of the effect of varying $c$
(see Fig.~\ref{fig:best_cs} in Sec.~\ref{sec:experiments}), shows that
the performance with respect to the $\mathit{F1}$-score (see
Eq.~\eqref{eq:f1} for definition) is not overly sensitive with respect
to the choice of $c$. For example, $c=5$ works well for all the
datasets we used.

\subsection{Using  \drifter  to solve Prob.~\ref{prob:main}\label{ssec:fulldrifter}}

We now summarize, how the \drifter method is used in practice to
solve Prob.~\ref{prob:main}. Assume that a model $f$ has been trained
using $D_\mathrm{tr}=\{(i,x_i,y_i)\}_{i=1}^{n_\mathrm{tr}}$, and we
know that the {\em concept length} in the training data is
$l_\mathrm{tr}$. We use this knowledge to form a segmentation $S$ of
$[n_\mathrm{tr}]$ such that there are $k$ segments of length
$l_\mathrm{tr}$. We also need to make a choice for the model family
using which we should train the segment models (function
\texttt{train\_f}). In practice, linear regression models seem to consistently
perform well (see, Sec.~\ref{sec:experiments}).  Then,
the {\em training phase} consists of a call to
Alg.~\ref{alg:drifter:train} to obtain an ensemble $F=(f_1, \ldots,
f_k)$ of segment models.

Once the segment models have been trained, we can readily use these to
detect concept drift in the testing data
$D'_\mathrm{te}=\{(i,x_i)\}_{i=1}^{n_\mathrm{te}}$. In the {\em
  testing phase}, we should call Alg.~\ref{alg:drifter:test} for a
testing data with $n_\mathrm{te}\leq l_\mathrm{tr}$, where
$l_\mathrm{tr}$ is the segments length used to train the segment
models~$F$ in Alg.~\ref{alg:drifter:train}. In practice, this is
achieved by splitting the testing data into small segments of length
$l_\mathrm{te}$ (e.g., we use a constant $l_\mathrm{te}=15$ in the
experimental evaluation in Sec.\ref{sec:experiments}) and calling
Alg.~\ref{alg:drifter:test} for each small test segment individually.

If we have split the testing data into $k'$ segments, and obtained a
vector of concept drift indicator values $(d_1, \ldots, d_{k'})$ using
Alg.~\ref{alg:drifter:test}, we can then compare these values to the
concept drift detection threshold $\delta$, which is either
user-specified or obtained using the approach described in
Sec.~\ref{ssec:threshold}, and classify each segment in the testing
data, either as a segment exhibiting concept drift ($d_i\geq \delta$)
or not ($d_i<\delta$).

We observe that the time complexity of \drifter
is dominated by the training phase, where we need to train $k$
regressors using data of size $\bigohpar{n_\mathrm{tr}/k}$ and
dimensionality $m$. For OLS regression, e.g., the complexity of
training one segment model is hence $\bigohpar{n_\mathrm{tr}m^2/k}$
and hence the complexity of the training phase is $\bigohpar{nm^2}$.

% --------------------------------------------------------------------------------

\section{Experiments} \label{sec:experiments}

In this section we experimentally evaluate {\small \texttt{drifter}} in detection of concept drift.
We first present the datasets and the regressors used (Sec. \ref{ssec:datasets}) and discuss generalization error and the default parameters used in the experiments (Sec. \ref{ssec:parameters}). In Secs.~\ref{ssec:sel-concept-model-family} and  \ref{ssec:sel-c}
we pin down suitable combinations of the remaining parameters of {\small \texttt{drifter}}.
In Sec. \ref{ssec:scalability} we assess the runtime scalability of {\small \texttt{drifter}}
 on synthetic data, and finally in
Sec. \ref{ssec:detection} we look at how {\small \texttt{drifter}} finds concept drift on our considered dataset and regression function
combinations.

The experiments were run using R (version 3.5.3) \cite{Rproject} on
a high-performance cluster  \cite{fcgi}
(2 cores from an Intel Xeon E5-2680 2.4 GHz with 256 Gb RAM).
An implementation of the \drifter algorithm and
the code for the experiments presented in this paper 
has been released as open-source software \cite{drifter}.
%at \url{https://github.com/edahelsinki/drifter/}.

\subsection{Datasets and regressors}\label{ssec:datasets}
We use the datasets described in Tab.~\ref{tab:datasets} in our
experiments. A brief description of each dataset and the regressor
trained using it is provided below. During preprocessing we removed
rows with missing values, and transformed factors into numerical
values. For each dataset, we then use the first 50\% of the data as
the training set, and the remaining 50\% as a testing dataset, i.e.,
$n_\mathrm{tr}=\lfloor 0.5 n\rfloor$ and $n_\mathrm{te}=\lceil
0.5n\rceil$. We split the testing data into non-overlapping {\em test
  segments} of length $l_\mathrm{te}=15$ as described in
Sec.~\ref{ssec:fulldrifter}.

\begin{table}[t]
 \centering
  \caption{Datasets used in the experiments.\label{tab:datasets}}
 \begin{ruledtabular}
 \begin{tabular}{lrrcc}
 \textbf{Name} & $n$ & $m$ & \textbf{Target} & \textbf{Regressor} \\
\midrule
  \airquality & 7355 & 11 & CO(GT) & SVM \\
\airline & 38042 & 8 & Arrival delay & RF\\
\bikesharing  & 731 & 8 & Rental count & LM\\
\syntheticns($n$,$m$) & $n$ & $m$& $y \in \mathbb{R}$ &  LM, SVM, RF
\end{tabular}
\end{ruledtabular}
\end{table}

\paragraph{Air quality data} 
The \airquality dataset \cite{devito:2008} contains hourly air quality sensor
  measurements spanning approximately one year. We preprocessed the data by removing
  rows with missing data as well as the attribute NMHC(GT)
  containing mostly
  missing data.
  We use the first half of the data as the training set
  and train a regressor $f_\mathrm{AQ}$ for
   hourly averaged concentrations of carbon monoxide CO(GT) using Support Vector Machine (SVM) from
  the \emph{`e1071'} R package with default parameters.

\paragraph{Flight delay data}
The \airline dataset \cite{airline:kaggle} contains data related to flight delays
collected and published by the U.S. Department of Transportation's Bureau of Transportation Statistics.
We used the arrival delay variable as the target variable and selected
  a subset of the other attributes as covariates (namely, departure delay, day of the week,
  origin airport, airline, departure time, destination airport, distance, and scheduled arrival). In order
  to keep computation time manageable we only used every 150th
  sample.
  We used the first half of the data as the training set
  and trained a regressor $f_\mathrm{AL}$ for
  the arrival delay using Random Forest (RF) from the \emph{`randomForest'} R package 
   with default parameters.

\paragraph{Bike rental data}
The \bikesharing dataset \cite{bikesharing} contains daily counts of bike rentals
  and associated covariates related to weather and date types for a period of about two years.
  As covariates we used the attributes for
  holiday, weekday, working day, weather situation, temperature related variables, humidity, and windspeed.
  Hence, inherently drifting covariates such as date
  and season were removed.
  Exploratory analysis indicated \emph{real concept drift} to be present in
  the form of an increasing trend in the counts of bike rentals.
  Thus, we prepared an alternative version of the data by
  removing this trend in the testing data
  by multiplying each $y_i'\in D_\mathrm{te}$ by 
  $$m=\mathrm{mean}_{y\in D_\mathrm{tr}}(y)/\mathrm{mean}_{y'\in D_\mathrm{te}}(y').$$
  Hence, in the dataset \bikesharingr we use the original rental counts $y_i$ (with real concept drift present), whereas in the dataset \bikesharingd
  we use the modified rental counts $m\times y_i$ (with real concept drift removed).
 We then used the first half of both datasets as the training set
  and trained  OLS linear regression models (LM)  $f_\mathrm{B(raw)}$ and  $f_\mathrm{B(detr)}$ for predicting the rental counts for
  \bikesharingr and  \bikesharingd, respectively.

\paragraph{Synthetic data}
We did not find adequate existing methods for generating synthetic regression datasets
containing only virtual concept drift, and thus developed a new method for
constructing the data. The \syntheticns($n$,$m$) data we used is constructed as follows. The covariate matrix
$X \in \mathbb{R}^{n \times m}$ is sampled columnwise from $\text{AR}(1)$ with correlation length
$h$, defined as the number of steps after which the expected autocorrelation drops to $0.5$, and the amplitude $amp$.
The elements of a noise vector $e \in \mathbb{R}^{n}$ are sampled from a normal distribution $N(0,\sigma_N^2)$.
The target variable is then constructed
as $Y = g(X^T)+e$, where we use $g=\sin$ to introduce non-linearity.
Here, we use $h=150$, 
$amp = 1$, and i.i.d. noise $e$ sampled from $N(0,\sigma_N^2)$, where
$\sigma_N=0.3$. 

In the scalability experiments (reported in Sec.~\ref{ssec:scalability}) we vary  the data dimensions $n$ and $m$ when generating datasets \syntheticns($n$,$m$).
In the remaining experiments,  we use the dataset \syntheticns($2000$,$5$),  i.e., a 5-dimensional dataset $X$ of length $n=2000$ samples, 
where we  add 
a concept drift component at $[1700,1800]$ by
using $amp = 5$
during this period.
We trained LM, RF, and SVM regressors with the \synthetic data.

\subsection{Generalization error threshold and parameters of \texttt{drifter}} \label{ssec:parameters}
The datasets we use do not have predefined ground truth values
and we hence first need to define what constitutes concept drift
in the test datasets.
The user should choose the threshold $\sigma$:
in some applications a larger generalization
error could be tolerated, while
in some other applications the user might want to be alerted already about smaller errors.
In the absence of a user, we determined the error threshold $\sigma_\mathrm{emp}$ for the
datasets as follows.
We used 5-fold cross-validation,
where we randomly split the training data into five folds, and estimated the value
of the $i$th dependent variable $y_i$ by a regressor trained on the four folds that do not contain $i$,
thereby obtaining a vector of estimates $\hat y_i$ for all $i\in[n_{\mathrm{tr}}]$.
We then computed the generalization error for the training data
as in Eq. \eqref{eq:generalisationerror} and then chose
$$\sigma_\mathrm{emp} =2\times\left(\sum\nolimits_{i=1}^{n_{\mathrm{tr}}}{\left(
\hat y_i-y_i\right)^2/n_{\mathrm{tr}}}\right)^{1/2}.$$
Then, all values exceeding $\sigma_\mathrm{emp}$ in
the test dataset are considered concept drift. While this cross-validation
procedure does not fully account for possible autocorrelation in the training data
we found that in our datasets it gives a reasonable estimate of the generalization error
in the absence of concept drift.

To assess what a suitable value for $c$ would be in our proposed scheme for selecting
the detection threshold $\delta$ (Sec. \ref{ssec:threshold})  
and to assess how well the scheme works in practice,
we compute the ``optimal'' detection threshold $\delta_\mathrm{opt}$
in terms of the $\mathit{F1}$-score for a given error threshold $\sigma_\mathrm{emp}$ as follows.
We vary the concept drift detection threshold $\delta$ 
and
evaluate the true and false positives rates on the test dataset, allowing us to form a ROC curve.
We then pick the $\delta_\mathrm{opt}$ maximizing the
$\mathit{F1}$-score:
\begin{equation}
\label{eq:f1}
\mathit{F1} = 2 \mathit{TP} / (2\mathit{TP} + \mathit{FP} + \mathit{FN}),
\end{equation}
where $\mathit{TP}$ are true positives, $\mathit{FP}$ are false positives and $\mathit{FN}$ are
false negatives.

For the other parameters, we 
use in the training phase the segmentation scheme with 50\% overlap between consecutive segments. 
In the testing phase, we split the testing data into non-overlapping segments of fixed length ($l_\mathrm{te}=15$), and evaluate the concept drift indicator value on each test segment using
$n_\mathrm{ind}=2$.
In preliminary experiments, we also tested a segmentation scheme with no overlap between segments in the training phase,
and values $n_\mathrm{ind}\in\{1,2,3,5\}$.
The effect of these parameter options was rather small in practice, and
we chose the values using which the \drifter method performed most robustly in detecting virtual concept drift for our datasets.

\subsection{Effect of concept length and segment models}\label{ssec:sel-concept-model-family}

We next investigate the effects of the remaining input parameters,
i.e., (i) the constant $c$ in Eq.~\eqref{eq:delta}, (ii) the concept
length (i.e., the segment length $l_\mathrm{tr}$ in the training
phase), and (iii) the effect of the model family for the segment
models.  

We varied $k=\lfloor n_\mathrm{tr}/l_\mathrm{tr}\rfloor$, which means that
there are $2k-1$ segments in the training phase in the overlapping segmentation scheme.
The maximum value for $k$ was determined by the requirement $l_\mathrm{tr}\geq l_\mathrm{te}=15$.
For each $k$ and each dataset, we determined the value for $c_\mathrm{opt}$
that leads to $\delta_\mathrm{opt}$ in Eq.~\eqref{eq:delta} maximizing the $\mathit{F1}$-score.

For the choice of the model family, we considered two cases: either the segment models were trained using the same model family as the model $f$ given as input, or linear regression was used for the segment models.
Our evaluation showed that the linear segment models consistently performed the best (both in terms of  performance, e.g., $\mathit{F1}$-score,  robustness, and  computational cost, i.e., time needed to train the models). We hence focus on utilizing linear regression models as segment models in the rest of this paper. 

We would like to point out that there is an intuitive reason why LM outperforms SVM and RF as segment models. While SVM and RF 
give accurate predictions on the training data covariate distribution, they predict constant values outside of it. 
The linear OLS regressor on the other hand gives (non-constant) linearly increasing/decreasing predictions generalization
the farther from the training data covariate distribution the testing data is.
It should also be noted that for SVM the kernel choice makes a difference in terms of generalization behavior. 
We here used a radial basis function kernel, but if a polynomial kernel or a linear kernel were used, the model would behave more like LM.
 
The results are presented in Tab.~\ref{tab:k_results}.
The table shows the \emph{number of segments} of length $l_\mathrm{te}=15$ in the testing data identified as true ($\mathit{TP}$) and false ($\mathit{FP}$) positives, and true ($\mathit{TN}$) and false negatives ($\mathit{FN}$), respectively.
We observe, that concept drift is detected with a reasonable accuracy for the \airquality, \airline and \syntheticns(2000,5) datasets in terms of $\mathit{F1}$-score, i.e., the number of true positives and negatives is high, while the number of false positives and negatives remains low.
For each of these datasets we have identified the best performing combination of $k$ and $c$
(shown with bold in Tab.~\ref{tab:k_results}), and we subsequently use these particular combinations in Sec.~\ref{ssec:detection}.

For \bikesharingr we observe negative values when optimizing $c$. 
This is due to the \emph{real concept drift in the data}, i.e.,
the bike rental counts are higher during the second year, likely due to increasing popularity of the service. 
This is an effect not present in the training data (see Sec.~\ref{ssec:detection} and Fig.~\ref{fig:res:plots2}c for details).
Since real concept drift does not affect the concept drift indicator values, it means that the optimal threshold maximizing the $\mathit{F1}$-score
would be set to a very low value. We also observe a high number of false negatives.

However, when we consider the detrended \bikesharingd dataset in which the real concept drift has been removed, we no longer observe any (virtual) concept drift in the data (and hence we cannot compute the $\mathit{F1}$-scores). We can observe that our 
'algorithm correctly handles this, i.e., all the segments in the testing data are classified as true negatives. 
For the values of \bikesharingr and \bikesharingd in Tab.~\ref{tab:k_results} we have used $\delta$ larger than the maximal value of $d$ (similarly as in Sec.~\ref{ssec:detection} and Fig.~\ref{fig:res:plots2}c,d).

\begin{table}[H]
  \centering
  \caption{The effect of segment length $l_\mathrm{tr}$, using $k=\lfloor n_\mathrm{tr}/l_\mathrm{tr}\rfloor$, on drift detection accuracy in terms of the $\mathit{F1}$-score. $c_\mathrm{opt}$ is the value using which Eq.~\eqref{eq:delta} results in $\delta_\mathrm{opt}$ maximizing the $\mathit{F1}$-score,  $\mathit{TP}$ (resp. $\mathit{FP}$) is the count of true positives (resp. false positives), and $\mathit{TN}$ (resp. $\mathit{FN}$) is the  count of true negatives (resp. false negatives).}
  \label{tab:k_results}
\begin{ruledtabular}
\begin{tabular}{lccccccccc}
 \textbf{Data} & \textbf{Full model} & \textbf{Segment model} & \textbf{k} &
$\mathbf{c_\mathrm{opt}}$ & \textbf{F1} & \textbf{TP} & \textbf{FP} & \textbf{TN} & \textbf{FN}\\
 \midrule
  \airquality & SVM & LM& 2 & 6.770& 0.735&61 & 20 & 139& 24\\
                   & &  & {\bf{10}} & {\bf{7.144}} & {\bf{0.741}} &63 & 22 & 137& 22\\
                   & &  & 20 & 7.080& 0.737&56 & 11 & 148& 29\\
          %         & &  & 80 & 7.313& 0.698&52 & 12 & 147& 33\\
                  & &  & 100 & 5.835& 0.688&54 & 18 & 141& 31\\

 \midrule
  \airline & RF & LM& {\bf{2}} &        {\bf{5.632}} & {\bf{0.786}} &11 & 1 & 1250& 5\\
                   & &  & 10   & 5.695& 0.786&11 & 1 & 1250& 5\\
                   & &  & 20   & 5.769& 0.786&11 & 1 & 1250& 5\\
            %       & &  & 80   & 4.946& 0.786&11 & 1 & 1250& 5\\
                  & &  & 100   & 5.424& 0.769&10 & 0 & 1251& 6\\

 \midrule
                  \bikesharingr & LM & LM  & 2 &  --& -- & 0 & 0& 5 & 18\\
                  % & &  & 3   &-1.227& 0.85&17 & 5 & 0& 1\\
                   &      &        & 4   &--& -- & 0 & 0& 5 & 18\\
                    %               & &     & 5   &-1.728& 0.85&17 & 5 & 0& 1\\
                                  & &      & 6   &--& --&0 & 0 & 5 & 18\\

 \midrule
  \bikesharingd & LM & LM  & 2 &  --& --& 0 & 0&23&  0\\
%& &  & 3   &--& --& 0 & 0& 5&  18\\
 &          &    & 4   &--& --& 0 & 0& 23&  0\\
%                   & &     & 5   &--& --& 0 & 0& 5&  18\\
                  & &      & 6   &--& --& 0 & 0& 23&  0\\

 \midrule
  \syntheticns(2000,5) &LM&LM&2 & 5.571& 0.737&7 & 1 & 54& 4\\
                  % & &  & 5 & 3.499& 0.700&7 & 2 & 53& 4\\
                   & &  & 10& 4.722& 0.778&7 & 0 & 55& 4\\
                  % & &  & 20& 3.065& 0.706&6 & 0 & 55& 5\\
                  & &  & {\bf{60}} & {\bf{1.747}}& {\bf{0.857}}&9 & 1 & 54& 2\\

 \midrule
  \syntheticns(2000,5) &SVM&LM&2 &6.930& 0.769&5 & 2 & 58& 1\\
                %   & &  & 5 & 9.092& 0.769&5 & 2 & 58& 1\\
                   & &  & 10& 8.817& 0.769&5 & 2 & 58& 1\\
                %   & &  & 20& 9.335& 0.769&5 & 2 & 58& 1\\
                  & &  & {\bf{60}} & {\bf{17.015}}& {\bf{0.833}}&5& 1 & 59& 1\\

 \midrule
  \syntheticns(2000,5) &RF&LM&2 & 5.819& 0.750&6 & 1 & 56& 3\\
                %   & &  & 5 & 15.560& 0.750&6 & 1 & 56& 3\\
                   & &  & 10& 9.649& 0.778&7 & 2 & 55& 2\\
                %   & &  & 20& 21.066& 0.800&6 & 0 & 57& 3\\
                  & &  & {\bf{60}} & {\bf{3.883}}& {\bf{0.842}}&8 & 2 & 55& 1\\
\end{tabular}
\end{ruledtabular}
\end{table}

\begin{figure}[t]
  \centering
   \subfloat[\syntheticns(2000,5) with LM]{\includegraphics[width = 0.33\textwidth, trim=1mm 5mm 3mm 20mm, clip]{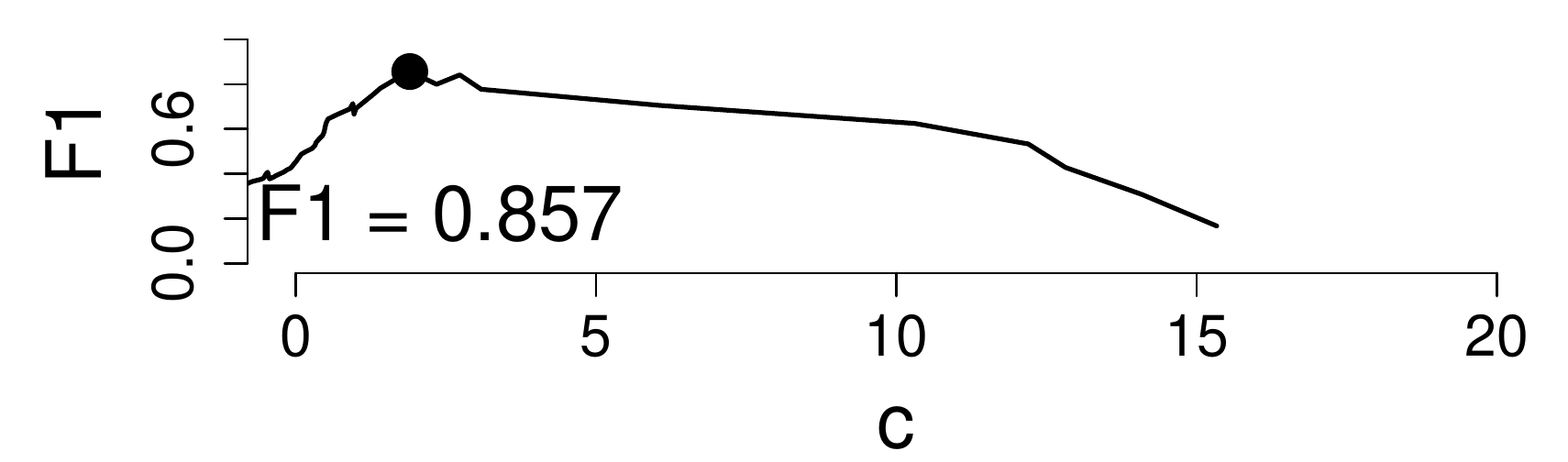}}
  \subfloat[\syntheticns(2000,5) with SVM]{\includegraphics[width = 0.33\textwidth, trim=1mm 5mm 3mm 20mm, clip]{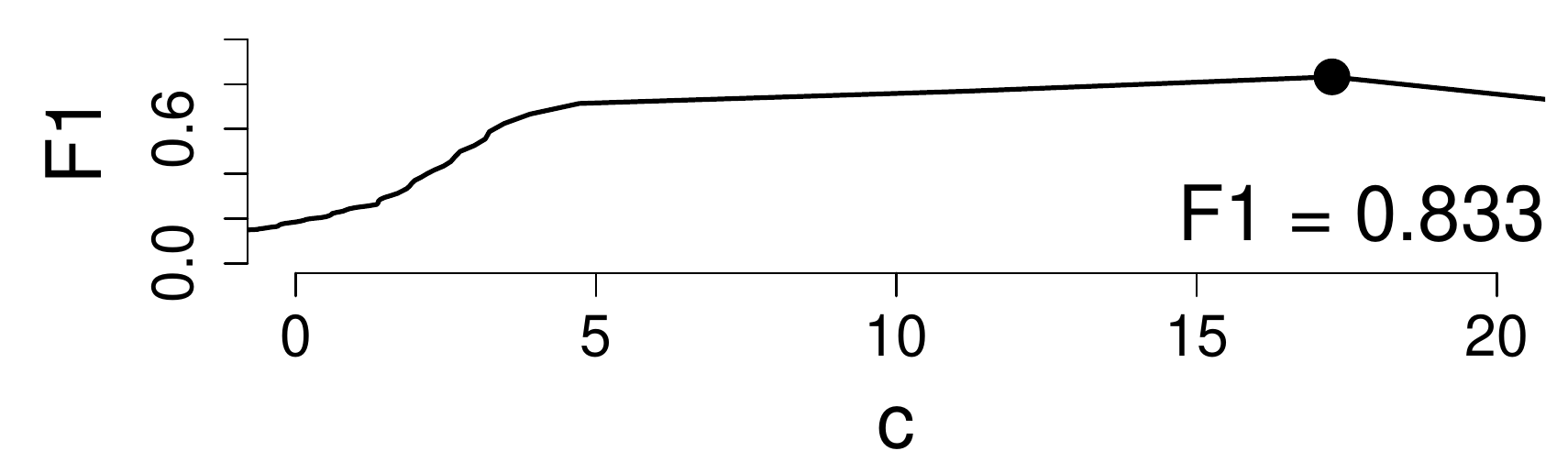}}
  \subfloat[\syntheticns(2000,5) with RF]{\includegraphics[width = 0.33\textwidth,  trim=1mm 5mm 3mm 20mm, clip]{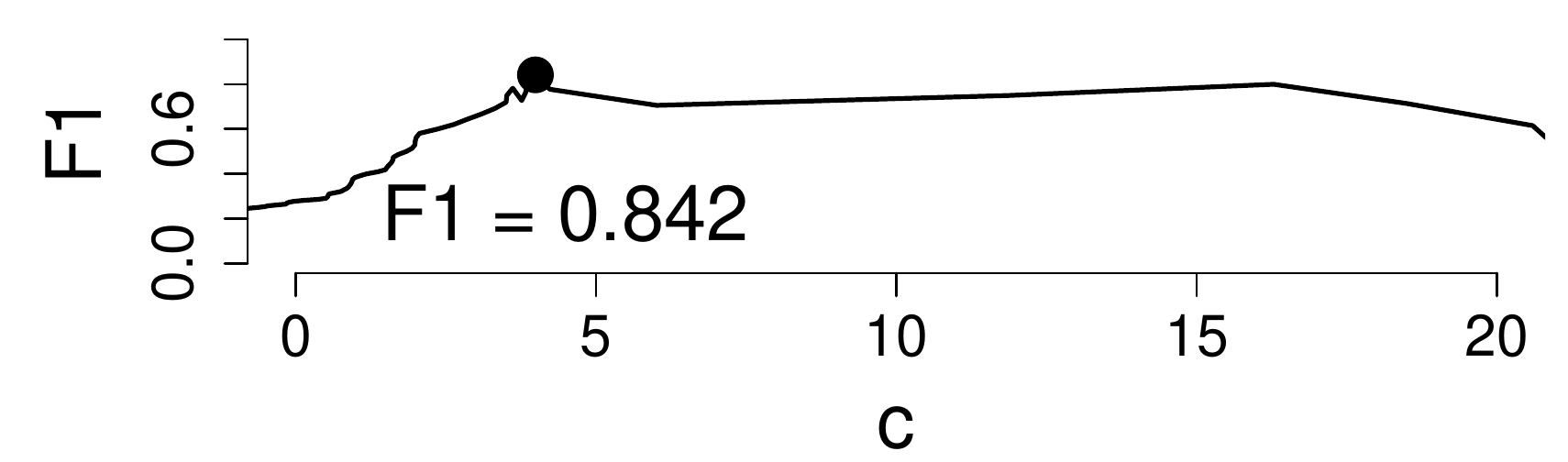}}\\
     \subfloat[\airquality]{\includegraphics[width = 0.33\textwidth, trim=1mm 5mm 3mm 20mm, clip]{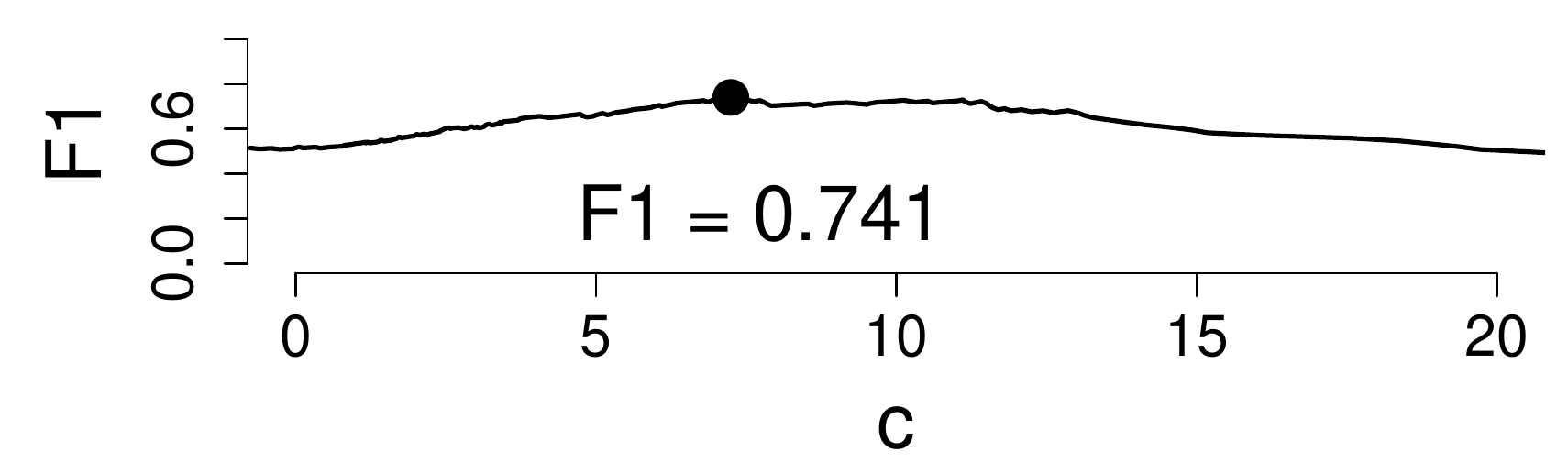}}
   \subfloat[\airline]{\includegraphics[width = 0.33\textwidth, trim=1mm 5mm 3mm 20mm, clip]{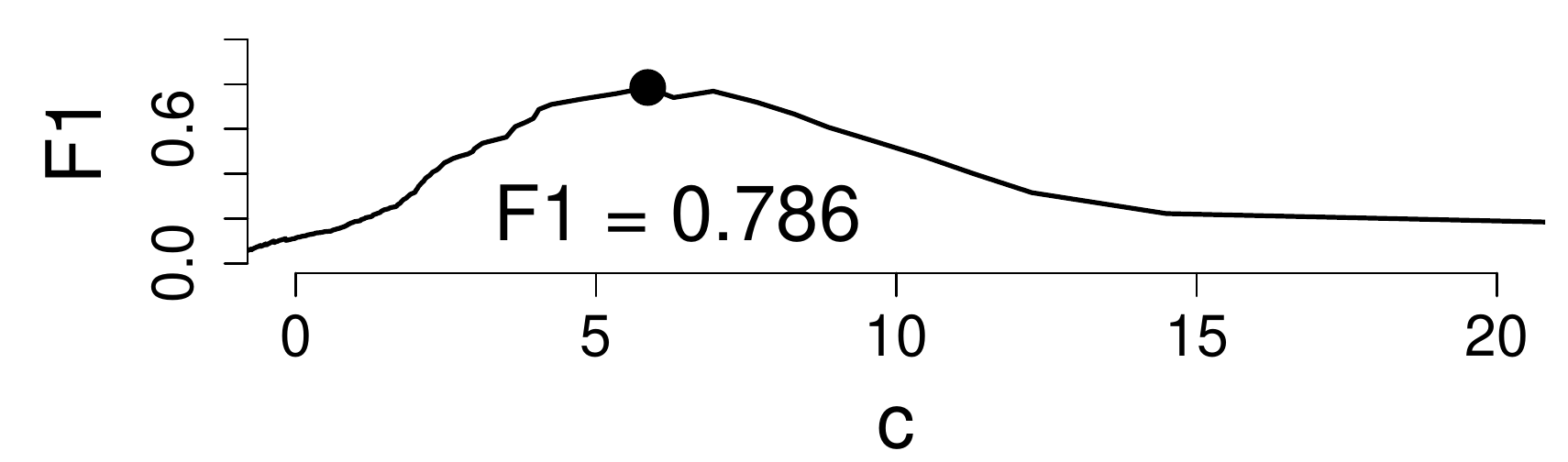}}
  \caption{The effect of the multiplier constant $c$ for $\delta$ in Eq.~\eqref{eq:delta},
    $c_\mathrm{opt}$ from Tab.~\ref{tab:k_results} shown with the corresponding $\mathit{F1}$-score.
  \label{fig:best_cs}}
\end{figure}

\subsection{Selecting a suitable drift detection threshold}\label{ssec:sel-c}
In this section we investigate how varying the value of the drift detection threshold $c$ affects the performance of the \drifter method.
For each dataset, i.e., \airquality, \airline, and \syntheticns(2000,5), we used the fixed parameter values
as defined in Sec. \ref{ssec:parameters} and selected the concept length (using the parameter $k$) based on the previous experiment, i.e.,
we used the $k$ resulting in the best performance in terms of the $\mathit{F1}$-score using the optimal $c$ (shown in bold in Tab.~\ref{tab:k_results}). We excluded the \bikesharingr and \bikesharingd datasets here, since they do not contain virtual concept drift, which makes the
relation between the $\mathit{F1}$-score and $c$ less informative.
The results are presented in Fig.~\ref{fig:best_cs}. We conclude that the
performance of our method is quite insensitive to the value of $c$
and that a
threshold value of $c=5$ seems to be a robust choice for the datasets considered.

\begin{figure}
  \centering
  \subfloat[Varying $n$.]{\includegraphics[height=3.15cm]{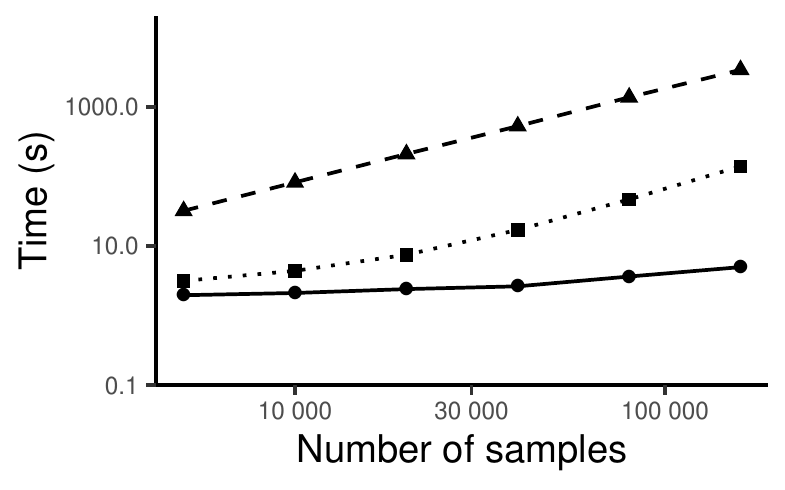}}
  \subfloat[Varying $m$.]{\includegraphics[height=3.15cm]{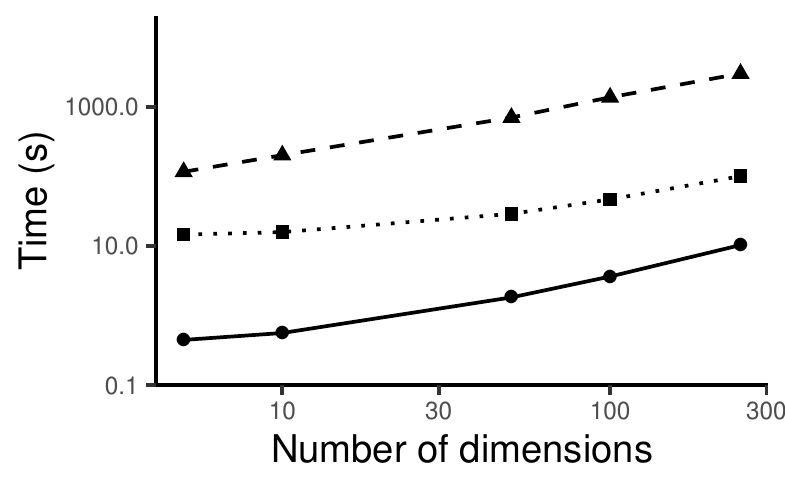}}
  \subfloat[Varying $k$.]{\includegraphics[height=3.15cm]{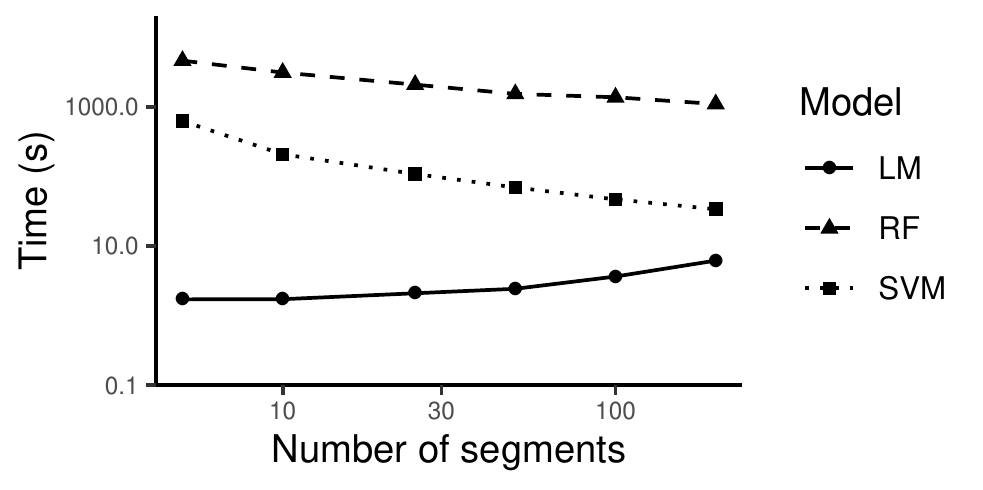}}
  \caption{Scalability of the \drifter algorithm in the training phase using \syntheticns($n$,$m$).
    In each figure, one of the parameters $n$ (training data length), $m$ (data dimension), and $k$ (number of training segments) is varied,  while the remaining ones are kept constant ($n=80\,000$, $m=100$, and $k=100$). 
  }
    \label{fig:res:scalability:tr}
\end{figure}

\begin{figure}
  \centering
  \subfloat[Varying $n$.]{\includegraphics[height=3.15cm]{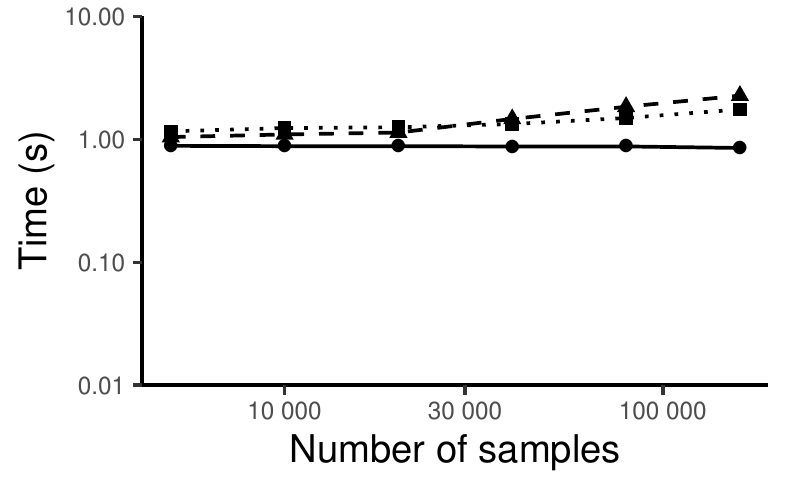}}
  \subfloat[Varying $m$.]{\includegraphics[height=3.15cm]{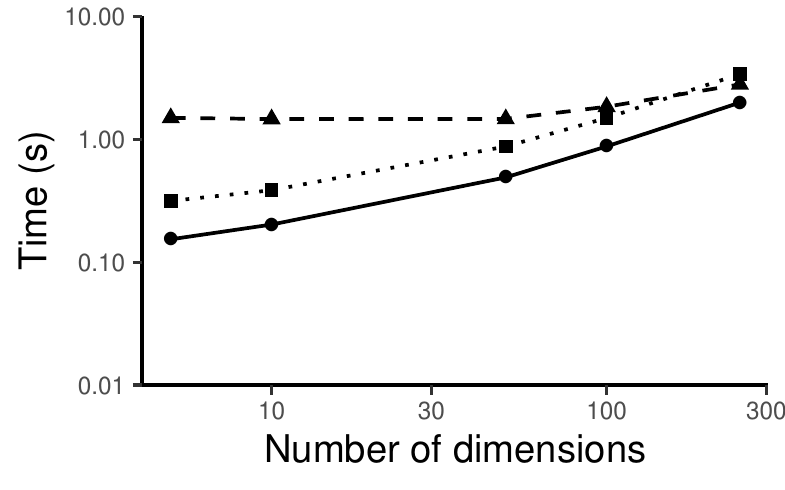}}
  \subfloat[Varying $k$.]{\includegraphics[height=3.15cm]{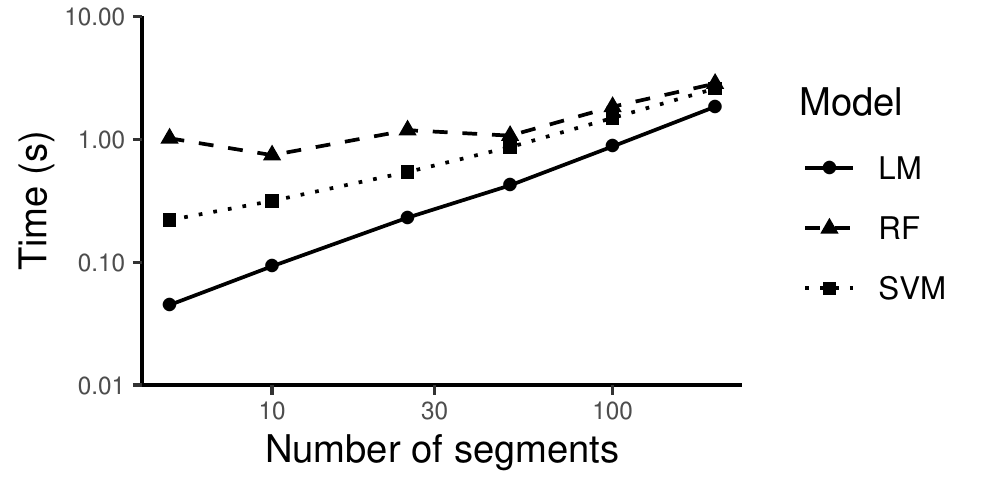}}
  \caption{Scalability of the \drifter algorithm in the testing phase with \syntheticns($15$,$m$), when trained using  \syntheticns($n$,$m$). 
  In each figure, one of the parameters $n$ (training data length), $m$ (data dimension), and $k$  (number of training segments) is varied,  while the remaining ones are kept constant ($n=80\,000$, $m=100$, and $k=100$). 
  }
    \label{fig:res:scalability:te}
\end{figure}

\subsection{Scalability}\label{ssec:scalability}
The scalability experiments were performed using the \syntheticns($n$,$m$) data. 
We constructed the datasets as described in Sec.~\ref{ssec:datasets}, and varied the data dimensionality $m\in\{5, 10, 50, 100, 250\}$, the length $n\in\{5\,000, 10\,000, 20\,000, 40\,000, 80\,000, 160\,000\}$ of training data, and
the parameter $k\in\{5, 10, 25, 50, 100, 200\}$ controlling the number of segments (and hence, $l_\mathrm{tr}$). 
We used \syntheticns($n$,$m$)
as the training dataset  and generated a testing dataset of constant length $l_\mathrm{te}=15$, i.e., we used \syntheticns($15$,$m$) as the testing data. 
Additionally, we used 
 $amp = 5$ for the testing data generation, while  $amp = 1$ was used for the training data. 
Since the actual training of the full model $f$ is not part of \drifter and the quality of the model $f$ is not relevant here, 
we used the first 500 samples of the training data to train an SVM regressor $f$. 
We then varied the choice for the model family (LM, SVM, RF) used by the \drifter in training the segment models.  

The median running time of the training and testing phases of \drifter over five runs are shown in Fig.~\ref{fig:res:scalability:tr} and
Fig.~\ref{fig:res:scalability:te}, respectively.
Here we observe that indeed the training phase is the dominant factor affecting the scalability, as discussed in Sec. \ref{ssec:fulldrifter}, and in particular when using OLS regression to train the segment models, our drift detection algorithm works fast for reasonable dataset sizes.

%  % trim order: left bottom right top
\begin{figure}
  \centering
    \subfloat[\synthetic (LM)]{\includegraphics[width = \textwidth, trim=27mm 25mm 15mm 20mm, clip]{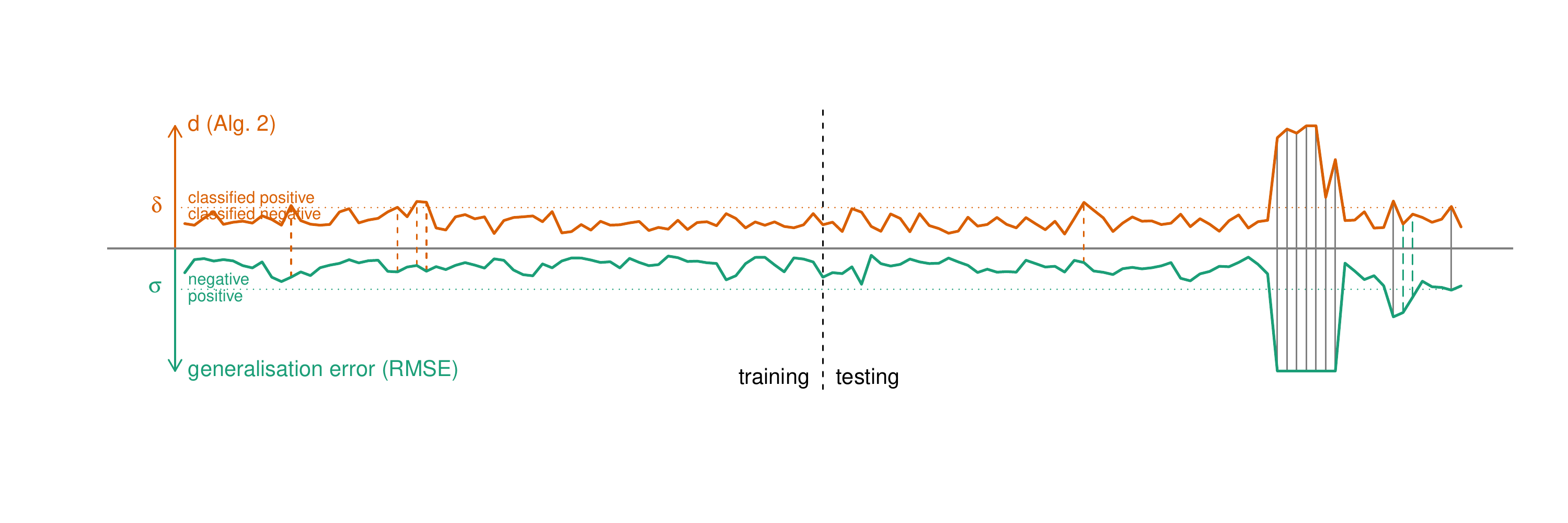}}\\
        \subfloat[\synthetic (SVM)]{\includegraphics[width = \textwidth, trim=27mm 25mm 15mm 20mm, clip]{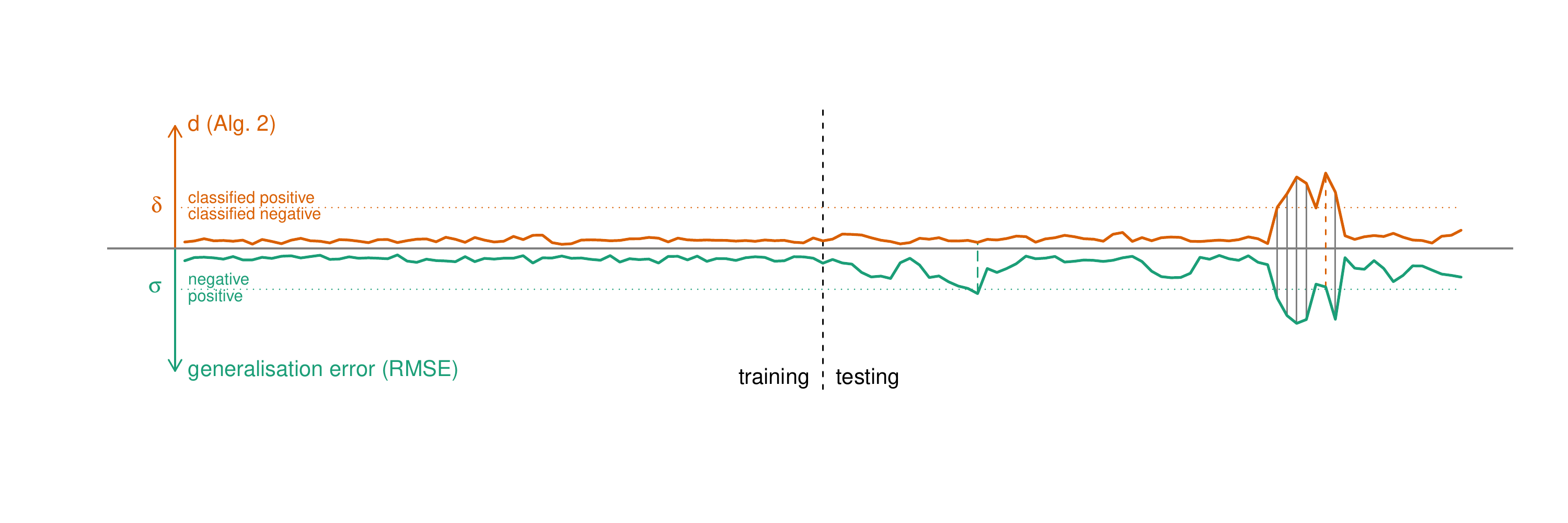}}\\
   \subfloat[\synthetic (RF)]{\includegraphics[width = \textwidth, trim=27mm 25mm 15mm 20mm, clip]{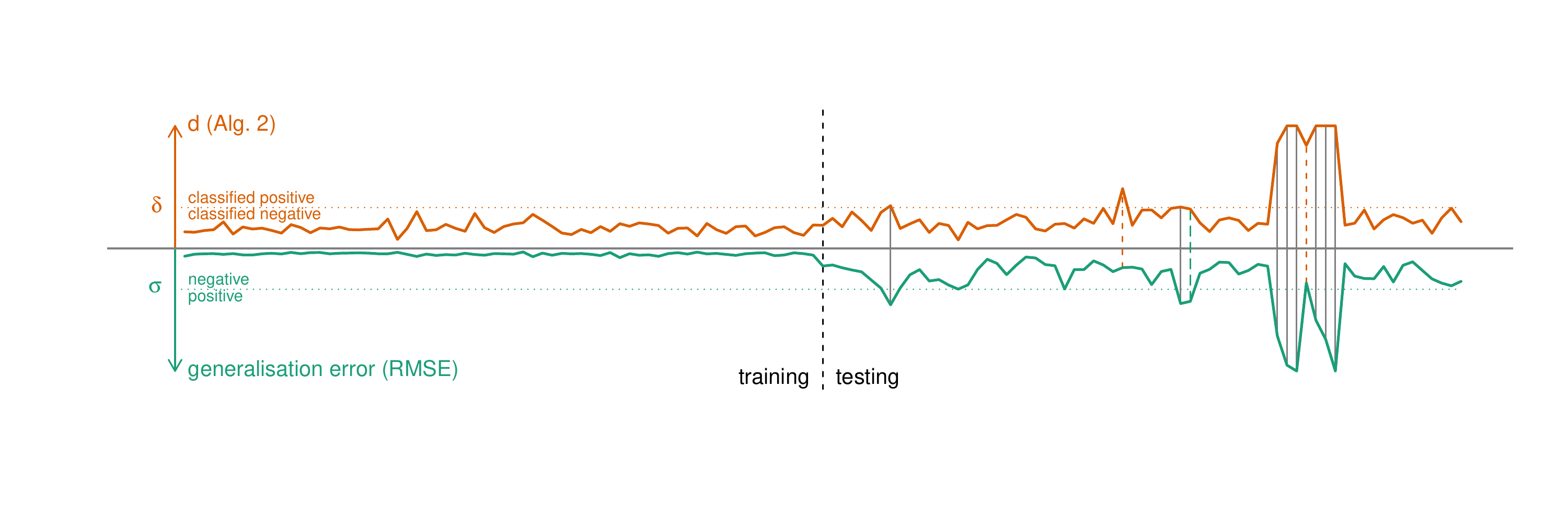}}\\
  \caption{The generalization error and concept drift indicator $d$ for the test segments
  of length $l_\mathrm{te}=15$ in the \syntheticns(2000,5) dataset. Here, $\delta$ denotes
     the concept drift detection threshold and $\sigma$ denotes the generalization error threshold. 
    The vertical lines between the two curves
    indicate the segments that are true positives (gray solid line), false positives (orange dashed line), and false negatives (green longdash line).
    }
    \label{fig:res:plots1}
\end{figure}

\begin{figure}
  \centering
   \subfloat[\airquality]{\includegraphics[width = \textwidth, trim=27mm 25mm 18mm 20mm, clip]{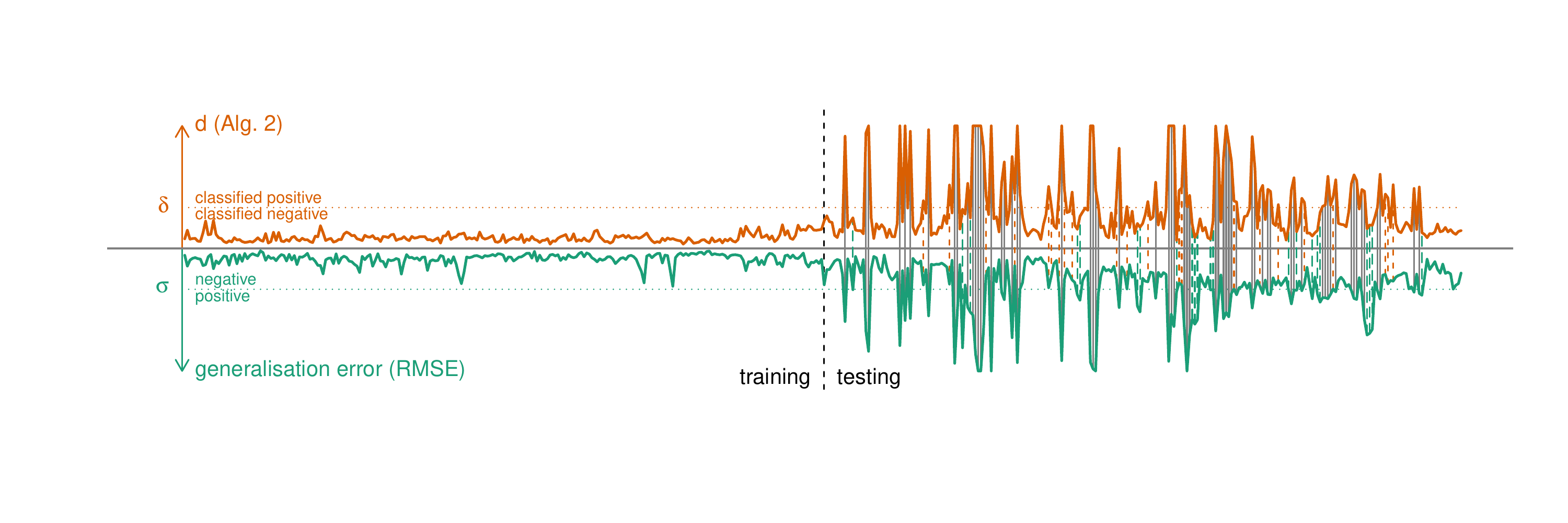}}\\
  \subfloat[\airline]{\includegraphics[width = \textwidth, trim=27mm 25mm 18mm 20mm, clip]{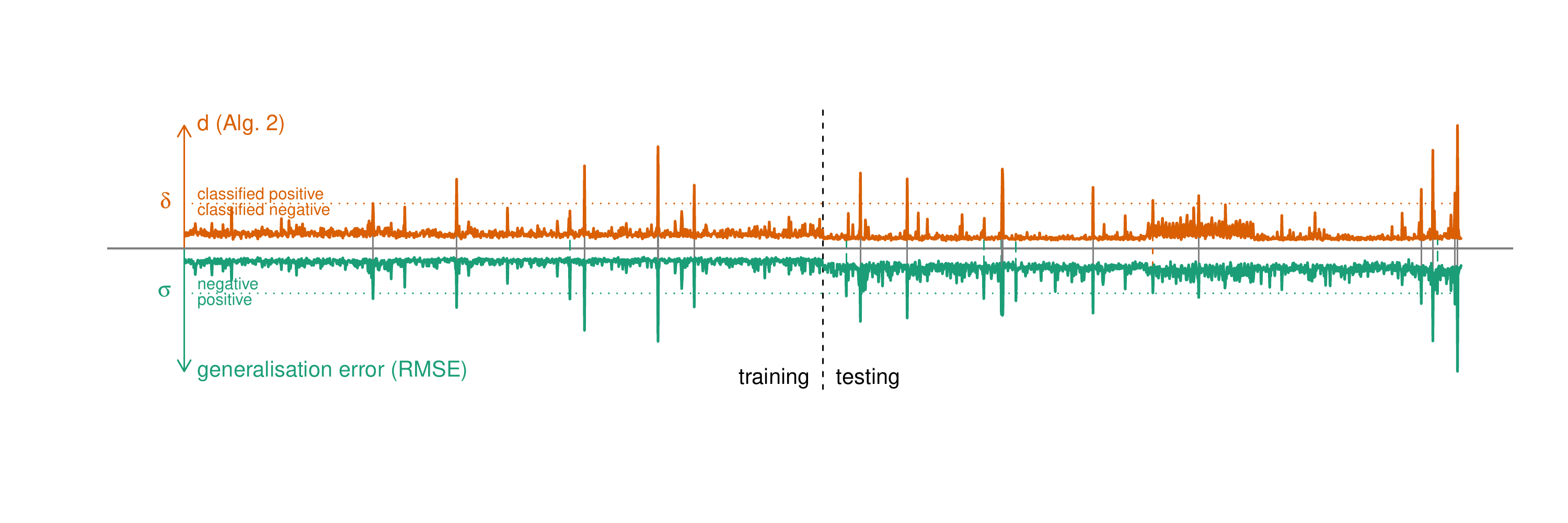}}\\
    \subfloat[\bikesharingr]{\includegraphics[width = \textwidth, trim=22mm 25mm 15mm 20mm, clip]{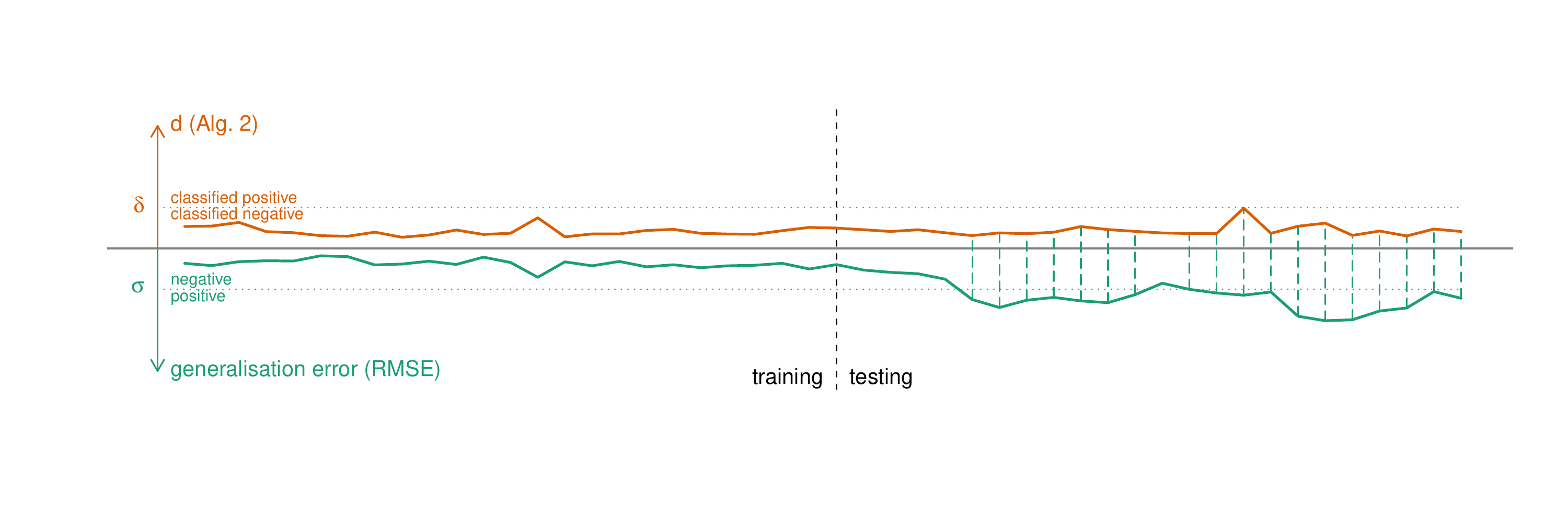}}\\
  \subfloat[\bikesharingd]{\includegraphics[width = \textwidth, trim=22mm 25mm 15mm 20mm, clip]{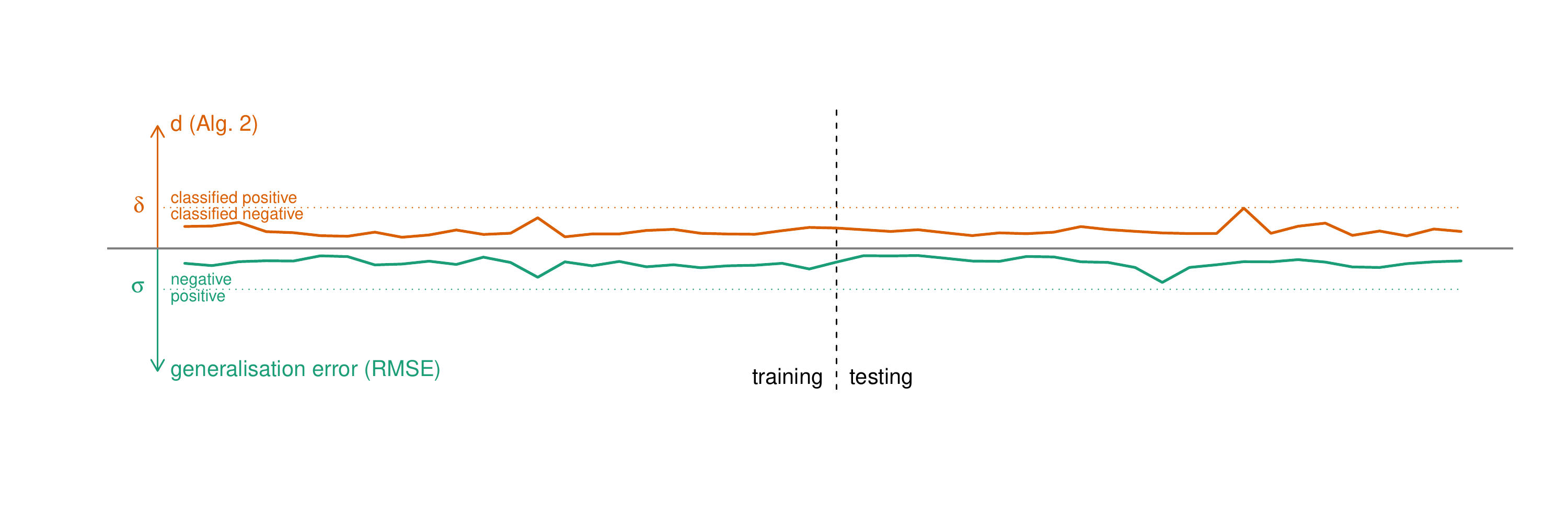}}
  \caption{
  The generalization error and concept drift indicator $d$ for  test segments
  of length $l_\mathrm{te}=15$  in \airquality, \airline, and \bikesharing datasets. 
  Here, $\delta$ denotes the concept drift detection threshold and $\sigma$ denotes the generalization error threshold. 
    The vertical lines between the two curves
    indicate the segments that are true positives (gray solid line), false positives (orange dashed line), and false negatives (green longdash line).
  }
    \label{fig:res:plots2}
\end{figure}

\subsection{Detection of concept drift}\label{ssec:detection}

Finally, we consider examples illustrating how our method for detecting concept drift works in
practice. In Figures~\ref{fig:res:plots1} and~\ref{fig:res:plots2}, we show the generalization error (green lines) and the concept drift
 indicator value $d$ (orange line), 
and in Fig.~\ref{fig:res:roc} the ROC-curves for \syntheticns(2000,5), \airquality, \airline, \bikesharingr, and \bikesharingd datasets. 
For \syntheticns(2000,5), \airline, and \airquality data we have used parameters bolded in Tab.~\ref{tab:k_results}, and for \bikesharingr, and \bikesharingd we used $k=4$ and selected $\delta$ to be larger than maximal value of $d$, 
because for \bikesharingr 
a negative value of $c$ would lead to a non-sensical value of $\delta$, 
and because \bikesharingd does not have concept drift at all.

For the \syntheticns(2000,5) data (Fig.~\ref{fig:res:plots1}) we observe that our algorithm can detect the virtual concept drift introduced during the period $[1700,1800]$.

\begin{figure}[t]
  \centering
    \subfloat[\synthetic (LM)]{\includegraphics[width = 0.31\textwidth,trim=1mm 2mm 1mm 3mm, clip]{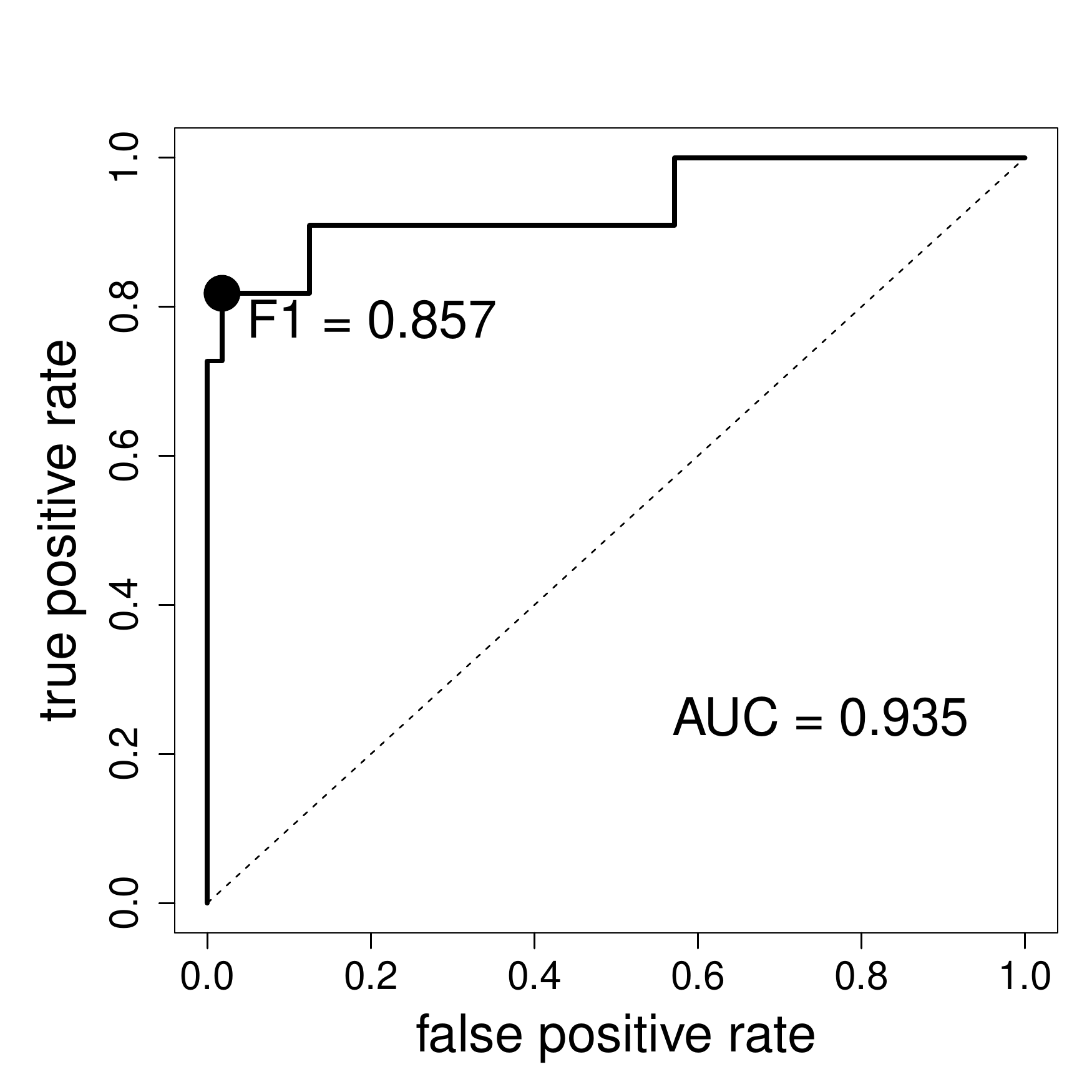}}
    \subfloat[\synthetic (SVM)]{\includegraphics[width = 0.31\textwidth,trim=1mm 2mm 1mm 3mm, clip]{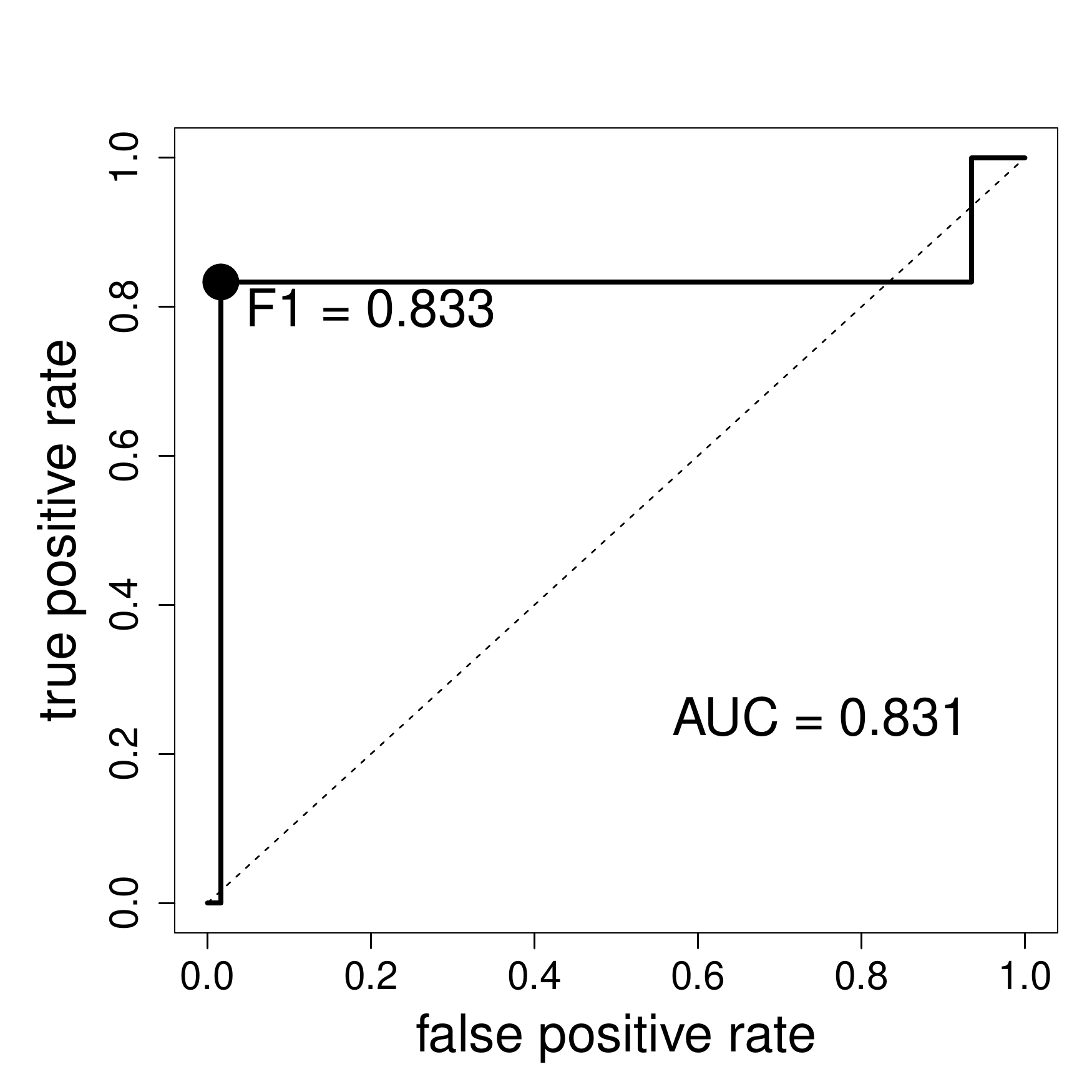}}
    \subfloat[\synthetic (RF)]{\includegraphics[width = 0.31\textwidth,trim=1mm 2mm 1mm 3mm, clip]{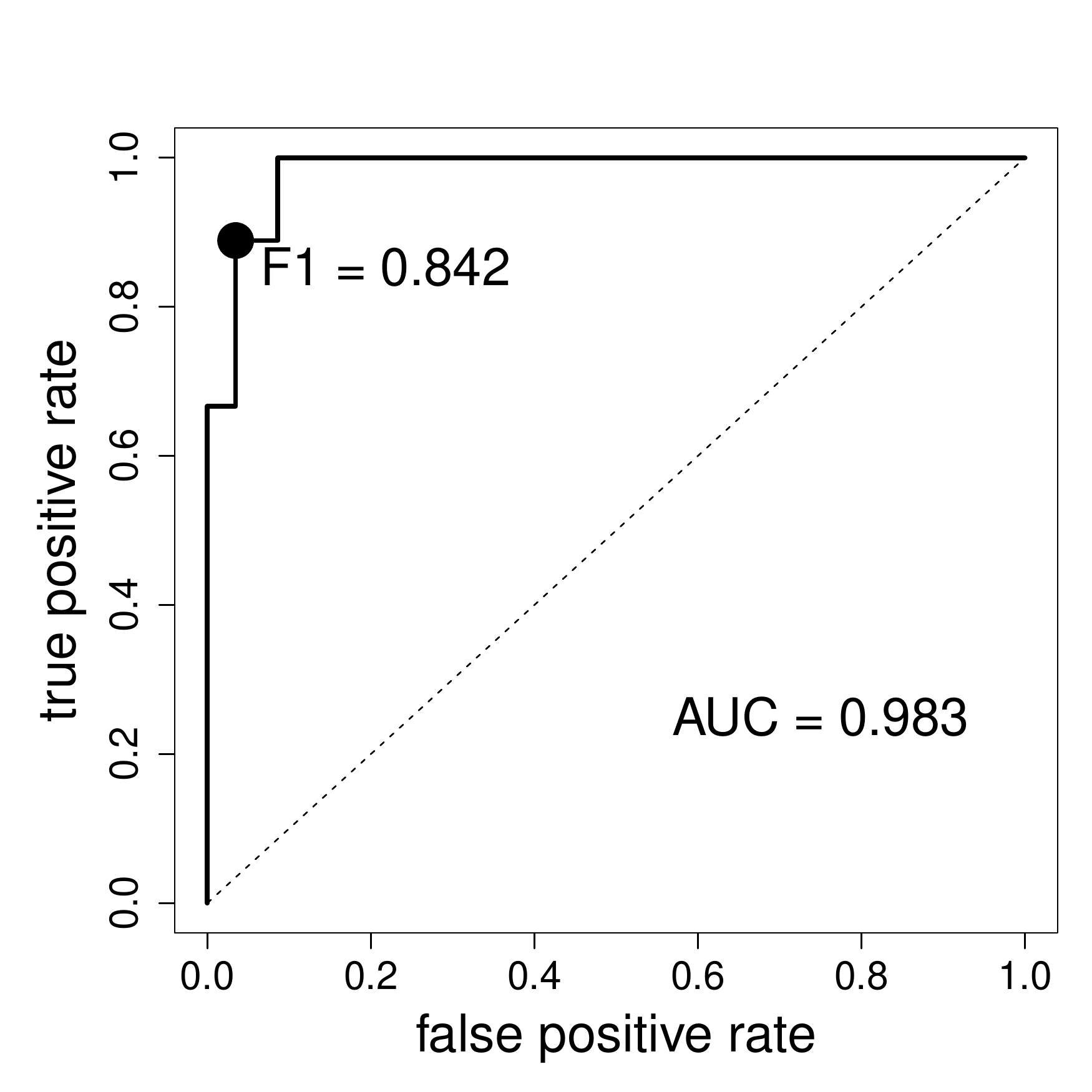}}\\
  \subfloat[\airquality]{\includegraphics[width = 0.31\textwidth, trim=1mm 2mm 1mm 3mm, clip]{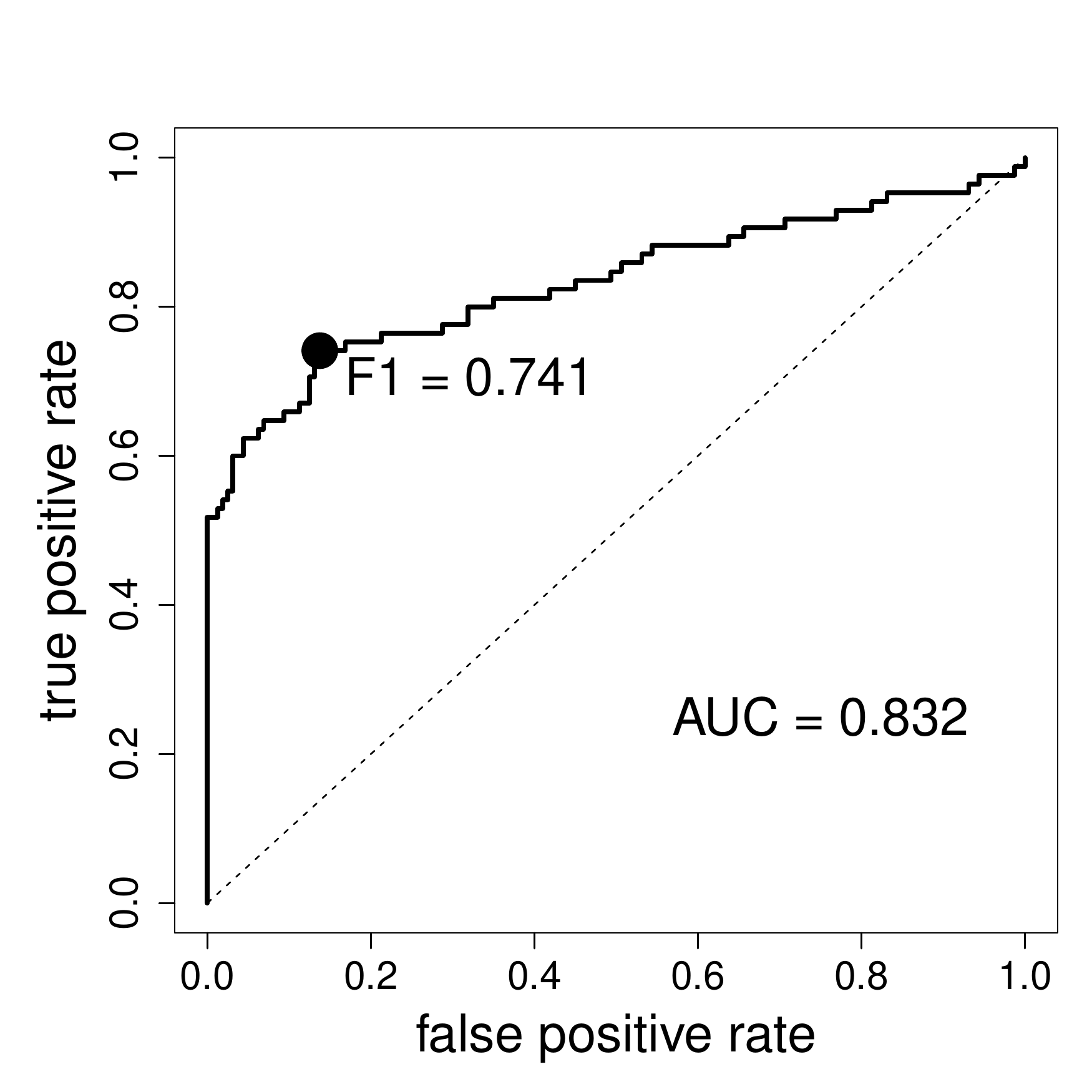}}
  \subfloat[\airline]{\includegraphics[width = 0.31\textwidth,trim=1mm 2mm 1mm 3mm, clip]{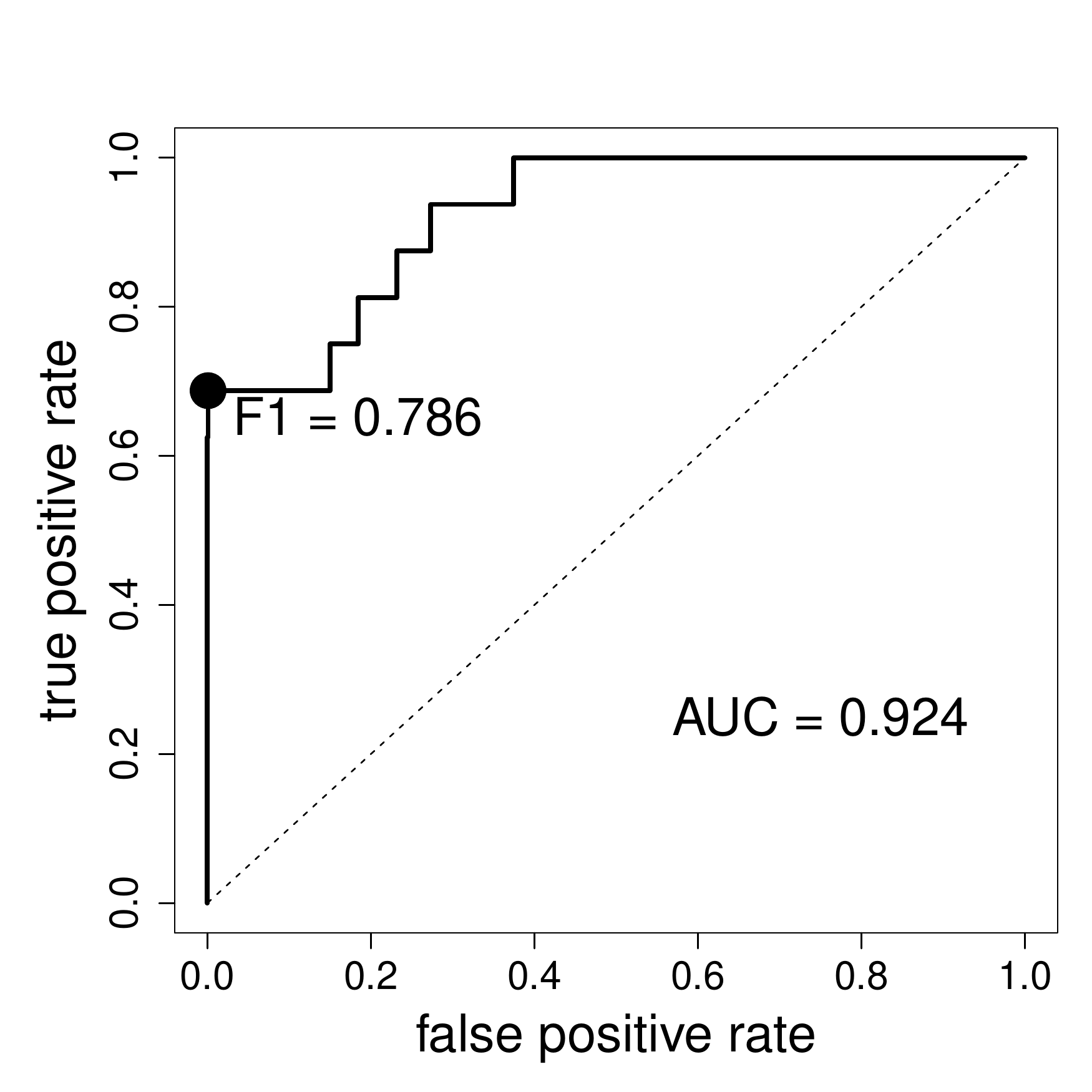}}
  \caption{ROC-curves \cite{fawcett2006} for \syntheticns(2000,5), \airquality, and \airline datasets.}
    \label{fig:res:roc}
\end{figure}

For the \airquality data (Fig.~\ref{fig:res:plots2}a) we observe that a significant amount of the testing data seems to exhibit concept drift, and our algorithm detects this. There is a rather natural explanation for this. The \airquality data contains measurements of a period of one year.
The model $f_\mathrm{AQ}$ has been trained on data covering the spring and summer months (March to August), while the testing period consists of
the autumn and winter months (September to February). Hence, it is natural that the testing data contains concepts not present in the training data.
Furthermore, one should observe that the last segments of data again begin to resemble the training data, and hence we do not observe concept drift in these segments. 

For the \airline data, we observe that that some of the segments in the training data  also have a rather high generalization value for the error, indicating that there are parts of the training data that the regressor $f_{AL}$
does not model particularly well. However, the  concept drift indicator $d$ behaves similarly to RMSE (both for segments in the training and testing data),
demonstrating that it can be used to estimate when the generalization error would be high.

For the \bikesharingr data (Fig.~\ref{fig:res:plots2}c) we observe that even though the generalization error is large for most of the segments in the testing data, the drift detection indicator does not indicate concept drift. This is explained by the \emph{real concept drift} present in data, and once we have removed it in the \bikesharingd data (Fig.~\ref{fig:res:plots2}d) we observe no concept drift.
We hence observe that a considerable number of false negatives can indicate real concept drift in the data. 
However, in order to detect this, one needs to have access to the ground truth values.

% --------------------------------------------------------------------------------

\section{Discussion} \label{sec:discussion}

In this paper, we have presented and evaluated an efficient method for detecting
concept drift in regression models when the ground truth is unknown.
In this paper, we define concept drift as a phenomenon causing larger than
expected estimation errors on new data, as a result of changes in the generating
distribution of the data. Defining concept drift in terms of the estimation
error, instead of considering all changes in the distribution, makes it
possible to detect only the changes that actually affect the prediction
quality. Thus, if concept drift detection is used to monitor the performance of a
regression model, it reduces the false positives rate.
It is surprising
how little attention this problem has received, considering its importance in
multiple domains.

When the dependent variable $y$ is unknown it is only possible to detect changes in
the distribution of the covariates $p(x)$. Our idea is to use the regression functions themselves
to study the changes in this distribution. As we have shown for linear models
in Thm. \ref{thm:monotonic}, we postulate that if we train two or more regression functions on
different subsets of the data, then the difference in the estimates given by the regression functions
contains information about the generalization error. This method, while simple, is powerful.
It, e.g.,  ignores by design features of the data that are irrelevant for estimating~$y$.
The underlying assumption is that by using subsets of the training data we can train regressors that can
capture {\em concepts} in the data, and if the testing data contains concepts not found
in the training data, then it is likely that there is concept drift.
The \drifter method presented in this paper also scales
well. Especially high performance is reached using OLS linear
segment models.

In this paper, we have used models trained using different segments of the data. 
As future work,
an interesting topic to study is how the data could be ``optimally'' partitioned for this problem.
Another alternative---which we have experimented with but not reported here---is to
train several regression models from different model families on the data. In this paper we have
also focused on estimating the generalization error of a regression function. The same ideas
could be applied to  detect concept drift in classifiers as well.

The theoretical foundation for this approach is shown to hold in the simple case of linear regression. However, our empirical evaluation with real datasets of various types (and different regressors) demonstrates that the idea also works when there are sources of non-linearity. Our experiments suggest that often the (black-box) regressor given as input can be locally approximated using linear regressors, and the differences between the estimates from these regressors serve as a good indicator for concept drift. The current paper represents initial work towards a practical concept drift detection algorithm,
with experimental evaluation illustrating parameters that work robustly for the datasets considered in this work. 
Further work is needed to establish general practices for selecting suitable parameters for the \drifter method.

% If you have acknowledgments, this puts in the proper section head.
\begin{acknowledgments}
We thank Dr Martha Zaidan for help and discussions.
This work was funded by the Academy of Finland (decisions 326280 and 326339).
We acknowledge the computational
resources provided
by Finnish Grid and Cloud Infrastructure
\cite{fcgi}.
\end{acknowledgments}

\bibliography{ms}

\end{document}